
\documentclass{article}

\usepackage[utf8]{inputenc} 
\usepackage[T1]{fontenc}    
\usepackage{xcolor}
\usepackage[bookmarks, colorlinks=true, citecolor=violet,linkcolor=brown,urlcolor=blue]{hyperref}
\usepackage{url}            
\usepackage{booktabs}       
\usepackage{amsfonts}       
\usepackage{nicefrac}       
\usepackage{microtype}      
\usepackage{times}
\usepackage{graphicx}
\usepackage{epsfig}
\usepackage{wrapfig}
\usepackage{float}
\usepackage{amsmath}
\usepackage{amssymb}
\usepackage{amsthm}
\usepackage{rotating}
\usepackage{subfigure}
\usepackage{makecell}
\usepackage{algorithm}
\usepackage{algorithmic}
\usepackage{lipsum}
\usepackage{enumitem}
\usepackage{bm}
\usepackage{microtype}
\usepackage{graphicx}
\usepackage{subfigure}
\usepackage{booktabs} 
\usepackage{multirow}
\usepackage{multicol}
\usepackage{blindtext}
\usepackage{titlesec}

\usepackage{hyperref}



\usepackage[accepted]{icml2024}

\usepackage{amsmath}
\usepackage{amssymb}
\usepackage{mathtools}
\usepackage{amsthm}
\usepackage{xspace}
\usepackage{tikz}
\usetikzlibrary{tikzmark,calc}


\usepackage[capitalize,noabbrev]{cleveref}

\theoremstyle{plain}
\theoremstyle{definition}
\newcommand{\name}{MosT\xspace}
\newcommand{\nameE}{MosT-E\xspace}

\newcommand{\bL}{\mathbf{L}}

\newtheorem{definition}{Definition}

\newtheorem{assumption}{Assumption}
\newtheorem{proposition}{Proposition}
\newtheorem{theorem}{Theorem}
\newtheorem*{theorem*}{Theorem}

\newtheorem{lemma}{Lemma}
\newtheorem*{lemma*}{Lemma}

\DeclareMathOperator*{\argmax}{argmax}

\renewcommand{\vec}[1]{\mathbf{#1}}

\usepackage[textsize=tiny]{todonotes}

\icmltitlerunning{Many-Objective Multi-Solution Transport}

\begin{document}

\twocolumn[
\icmltitle{Many-Objective Multi-Solution Transport}




\begin{icmlauthorlist}
\icmlauthor{Ziyue Li}{umd}
\icmlauthor{Tian Li}{uc}
\icmlauthor{Virginia Smith}{cmu}
\icmlauthor{Jeff Bilmes}{uw}
\icmlauthor{Tianyi Zhou}{umd}

\end{icmlauthorlist}

\icmlaffiliation{umd}{University of Maryland}
\icmlaffiliation{cmu}{Carnegie Mellon University}
\icmlaffiliation{uw}{University of Washington, Seattle}
\icmlaffiliation{uc}{University of Chicago}

\icmlcorrespondingauthor{Tianyi Zhou}{tianyi@umd.edu}

\icmlkeywords{Machine Learning, ICML}

\vskip 0.3in
]



\printAffiliationsAndNotice{}  

\begin{abstract}

Optimizing the performance of many objectives (instantiated by tasks or clients) jointly with a few Pareto stationary solutions (models) is critical in machine learning. However, previous multi-objective optimization
methods often focus on a few number of objectives and cannot scale to many objectives that outnumber the solutions, 
leading to either subpar performance or ignored objectives.  
We introduce ``{\bf M}any-{\bf o}bjective multi-{\bf s}olution {\bf T}ransport (\name)'', a framework that finds multiple diverse solutions in the Pareto front of many objectives.  
Our insight is to seek multiple solutions, each performing as a domain expert and focusing on a specific subset of objectives while collectively covering all of them. 
\name formulates the problem as a bi-level optimization of weighted objectives for each solution, where the weights are defined by an optimal transport between the objectives and solutions. 
Our algorithm ensures convergence to Pareto stationary solutions for complementary subsets of objectives.   
On a range of applications in federated learning, multi-task learning, and mixture-of-prompt learning for LLMs, 
\name distinctly outperforms strong baselines, delivering high-quality,
diverse solutions that profile the entire Pareto frontier, thus
ensuring balanced trade-offs across many objectives.
\end{abstract}

\section{Introduction} \label{sec:intro}

\begin{figure}[!htbp]
\vspace{-0.2in}
\setlength{\abovecaptionskip}{-16pt}
\centering
\includegraphics[width=0.4\textwidth]{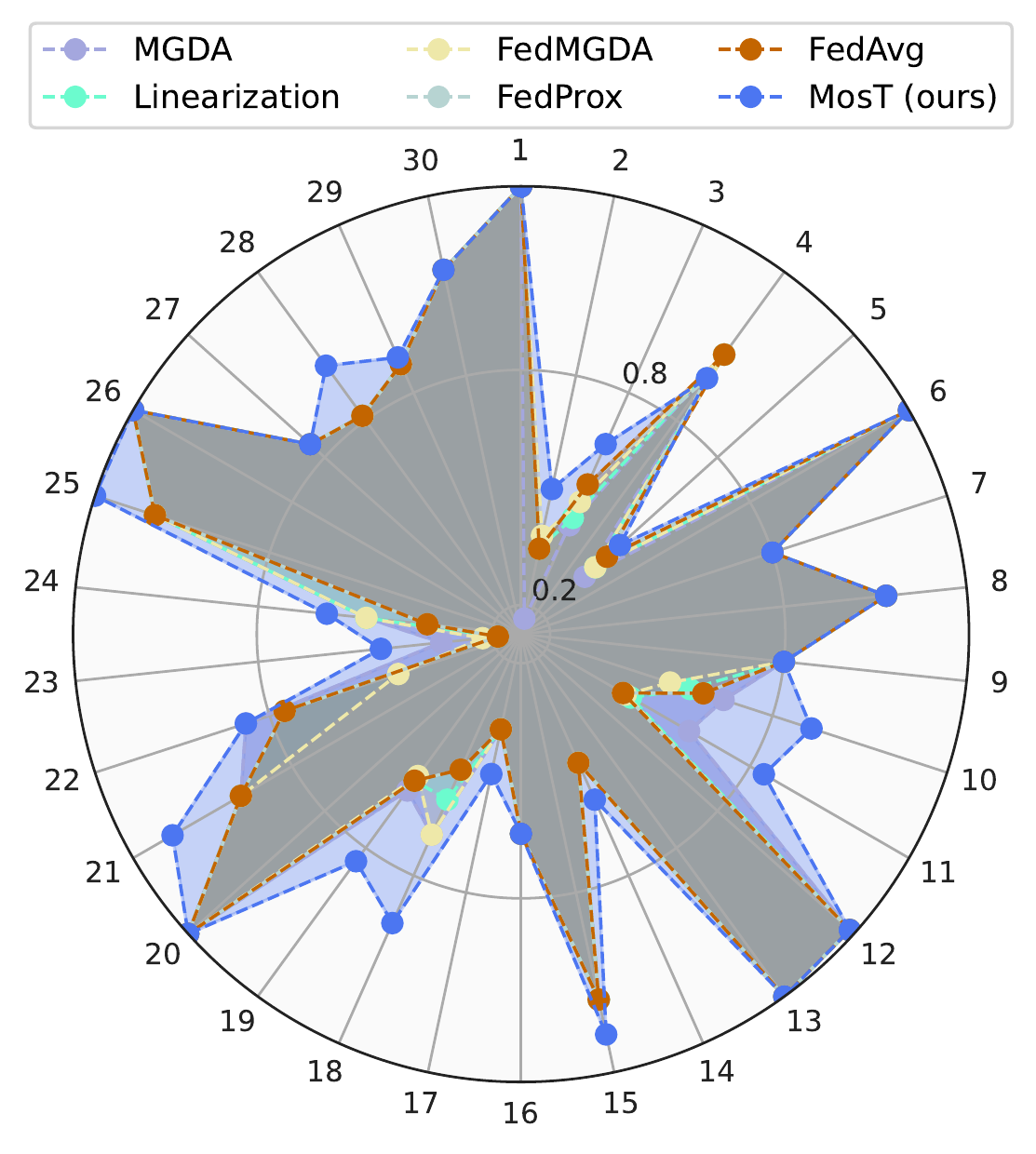}
\caption{Accuracies of different methods outputting 5 solutions serving 30 objectives (clients) in federated learning. 
\name results in a better coverage of all the objectives than the other baselines.
}
\vspace{-0.05in}
\label{fig:syn_radar_map}
\vspace{-1em}
\end{figure}

The underlying goal of many machine learning problems is to simultaneously optimize multiple objectives. Usually, there does not exist one solution (or model) that is optimal for all the objectives at the same time. 
Multi-objective optimization (MOO) aims to find a solution on the Pareto frontier where no objective can be improved without degrading others. 
One approach is to optimize a linear combination of all objectives into one, which may collapse to degenerate solutions for a small subset of objectives. 
Another method, multi-gradient descent algorithm (MGDA)~\citep{mgda} finds a common descent direction at each iteration to update the model so that no objective degrades. However, the trade-offs provided by MGDA solutions are not fully controllable even under recent advances like reference vectors to guide the search space among all the objectives~\citep{mahapatra2020multi}, especially when the Pareto frontier is unknown and complicated (e.g., non-smooth or discontinuous).

Moreover, MOO approaches typically focus on two or three objectives and hardly scale to many objectives, as shown in Figure~\ref{fig:syn_radar_map}. As objectives increase, it is less plausible that they will reach an agreement on a single solution. Instead of balancing all of them, it is more appealing to find multiple diverse yet complementary solutions on the Pareto frontier each focusing on a local domain of objectives.
This problem of finding $m$ Pareto solutions (or training $m$ models) for $n$ objectives can be understood as a multi-solution extension of MOO or a mixture of experts (MoE)~\citep{jjnh91} for multiple objectives.
However, as $n$ increases, existing methods (e.g., those based on reference vectors or a uniform exploration of the Pareto frontier) can be computationally prohibitive and thus practically infeasible. It is also challenging to pre-determine the search regions of a few representative solutions profiling the entire Pareto frontier. \looseness-1
The setting of $n \gg m$  is emerging in a variety of machine learning problems that involve many (i.e., big-$n$) users, domains, or evaluation criteria, each associated with a different training objective, but the total available data or computation can only support the training of $m\ll n$ models. 

In this paper, we ask: {\it can we develop a solution-objective matching mechanism to guide the exploration of $m$ solutions on the high-dimensional ($n \gg m$) Pareto frontier?} For example, some objectives share similar structures; so optimizing them with the same model can bring common improvement, while optimizing separate models for objectives with mutual conflicts can effectively avoid poor performance and the tug-of-war among them. Hence, a matching between models and objectives after every model update step is able to explore the correlation among $n$ objectives along the optimization trajectory. Specifically, we capture the matching relations with a weight matrix $\Gamma\in\mathbb R_+^{n\times m}$ where $\Gamma_{\cdot,j}$ reweighs the $n$ objectives that model $j \in [m]$ aims to optimize. For model $j$, such a \textit{reweighted} single-model multi-objective optimization problem steers towards a domain expert focusing on a locally consistent subset of objectives. With complementary (i.e., every objective being equally covered) and balanced (i.e., no model dominating on most objectives) $\Gamma$'s to adjust the descent directions per iteration, we aim to find $m$ diverse yet complementary solutions that lie in the Pareto front. \looseness-1

More concretely, we model the optimization of $\Gamma$ as an {\it optimal transport} (OT) between the $m$ models and $n$ objectives, with two marginal constraints $\Gamma\vec{1}_m=\alpha$\footnote{$\vec{1}_m$ denotes the $m$-dimensional all-ones vector and $\Gamma\vec{1}_m$ computes the row-wise sum of entries in $\Gamma$.} and $\Gamma^\top\vec{1}_n=\beta$ in OT, which allow us to control the ratio of $n$ objectives assigned to each model and the ratio of $m$ solutions optimized for each objective. For example, with a uniform $\alpha=(1/n)\vec{1}_n$, the $m\ll n$ models are enforced by OT weights $\Gamma$ to focus on different subsets of objectives---otherwise, violate the $\Gamma\vec{1}_m=(1/n)\vec{1}_n$ constraint could leave some  objectives uncovered. Similarly, with a uniform $\beta=(1/m)\vec{1}_m$, the training loads for the $m$ models tend to be balanced so no model would dominate the others on a majority of the objectives (Figure~\ref{fig:syn_radar_map}). 

We propose an efficient algorithm for the above ``{\it Many-objective multi-solution Transport} ({\bf \name})'' problem, which is a bi-level optimization between model parameters and objective-solution matching parameterized by $\Gamma$. Specifically, the upper-level is a $\Gamma$-reweighted MGDA problem for the $m$ models. The lower-level is a classic OT problem optimizing $\Gamma$ with marginal constraints. 
We further introduce a regularization term on top of $\Gamma$ to promote diversity of the optimal transport. Our algorithm converges to stationary points in the non-convex case, and converges to Pareto stationary solutions in the strongly-convex case under additional stability assumptions.
We additionally extend \name to handle the $n\ll m$ case by augmenting the $n$ objectives with random linear combinations of them (``{\bf \nameE}''). In practice, we introduce a curriculum for better model specialization by gradually varying the marginals $\alpha$ and $\beta$ in \name---focusing more on `models selecting objectives' at the earlier stage and later on transits to `objectives selecting the best models'.

We apply \name to various $n\gg m$ machine learning applications, spanning federated learning~\citep{mcmahan2017communication}, multi-task learning~\cite{lin2019pareto}, and mixture-of-prompt learning~\cite{qin2021learning}. Note that though the focus on this work is to address the challenges associated with scaling MOO to a large number of objectives, we also apply \name to the  $n\ll m$ scenarios, including fairness-accuracy trade-offs and other classic MOO problems (Appendix~\ref{app:less_obj_exp}). In all applications (Section~\ref{sec:exp}), \name is able to find diverse high-quality solutions on the Pareto front, consistently outperforming various strong baselines in terms of average accuracy and other popular metrics on the quality of multiple solutions, without extra computation cost. 

\section{Related Work} \label{sec:related_work}

\textbf{Single-Solution MOO.} 
The classic goal of MOO is to find some solution lying on the Pareto front of multiple objectives~\citep{mgda,roy2023optimization,halffmann2022exact,miettinen1999nonlinear}. One approach is to solve a linearized aggregation (i.e., weighted average) of all objectives. However, linearization, despite
being formulated under a broad coverage of objective weights, may result in solutions distributed over a small area on the entire Pareto front~\citep{boyd2004convex}. The multiple-gradient descent algorithm (MGDA)~\citep{mgda} is widely used due to its capability to handle complicated Pareto fronts and compatibility with gradient-based optimization. In this work, we use MGDA-style optimization methods in our algorithm, and compare with linearization-based objectives (with different weights) empirically (Section~\ref{sec:exp}), showing that our approach can find more diverse Pareto stationary solutions.

\textbf{Multi-Solution MOO.} 
One line of work that aims to discover diverse solutions across the entire Pareto front builds upon MGDA and guides the search process via constraints of preference vectors~\citep{lin2019pareto, mahapatra2020multi} or constraints of other objectives~\citep{zafar2017fairness}. These methods do not generalize well to the setting where there are many objectives (constraints) or the model dimension is large, since the number of preference vectors to explore the whole Pareto front may depend exponentially on these factors in the worst scenario~\citep{emmerich2018tutorial}. Even in the setting where there are only a few (e.g., two or three) objectives, diversity of the preference vectors in the action space (during exploration) may not translate to diversity in the solution space. We showcase the superiority of \name relative to these works in Section~\ref{sec:exp}.
Some works that balance Pareto optimality and solution diversity cannot guarantee the final solutions are on the Pareto front~\citep{liu2021profiling}.
For gradient-free methods, evolution strategies or Bayesian optimization~\citep{coello2006evolutionary,sindhya2012hybrid} has been explored to find multiple (as opposed to one) Pareto stationary solutions. However, they are usually not efficient when solving practical MOO problems in machine learning due to the lack of gradient information~\citep{liu2021profiling,momma2022multi}; hence, we do not compare with those methods.

\textbf{Applications in Machine Learning.} 
We demonstrate the effectiveness of \name on a set of applications including cross-device federated learning~\citep{mcmahan2017communication}.
There is also extensive prior research on personalized federated learning~\citep[e.g.,][]{smith2017federated,ghosh2020efficient,wu2022motley}, i.e., outputting multiple related models, instead of one, to serve all clients. Our approach can be viewed as a personalization objective in this context. Note that our goal is not to achieve the highest average accuracy for federated learning, but rather, \textit{explicitly} balance multiple objectives and guarantee that all output solutions are Pareto stationary.
We also explore multi-task learning~\cite{lin2019pareto} and mixture-of-prompt learning~\cite{qin2021learning} on standard benchmarks where each objective is a task or a training instance. For the $n\ll m$ case,
following the setup in prior MOO works~\citep[e.g.,][]{zitzler2000comparison,mahapatra2020multi}, we apply \name to a toy problem and address (algorithmic) fairness/utility trade-offs (two objectives).

\section{\name: Many-Objective Multi-Solution Transport} \label{sec:method}

In this section, we introduce our \name objective, and the alternating optimization algorithm we develop to solve it.

Let $L_i(\cdot)~(i \in [n])$ denote the empirical loss function of the $i$-th objective. When the number of objectives $n$ is much larger than the number of solutions $m$, it is possible that the learnt solutions (for example, simply by running MGDA for $m$ times with different randomness) cannot cover representative regions on the Pareto front. To address this, we use an assignment matrix $\Gamma \in \mathbb{R}_{+}^{n \times m}$ with constraints $\Gamma\vec{1}_m=\alpha$ and $\Gamma^\top\vec{1}_n=\beta$ on top of the losses to enforce a balanced matching between objectives and solutions. We could additionally add a regularizer $R(\Gamma)$ to encourage a more diverse assignment.
Our many-objective multi-solution transport (\name) objective is as follows. 
\newpage
 \begin{tikzpicture}[remember picture, overlay]
        \draw[line width=1pt, draw=gray!100, rounded corners=2pt, fill=gray!30, fill opacity=0.3]
            ([xshift=0pt,yshift=3pt]$(pic cs:a) + (238pt,-20pt)$) rectangle ([xshift=-5pt,yshift=30pt]$(pic cs:b)+(5pt,-110pt)$);
\end{tikzpicture}
Find $\{\theta_{1:m}\}$ such that every $\theta_j$  ($j \in [m]$) is the Pareto solution of $m$ weighted objectives 
\begin{align}\textstyle
    \min_{\theta_j}  \left(\Gamma_{1, j} L_1(\theta_j), \cdots, \Gamma_{n, j} L_n(\theta_j)\right),~~\textrm{where} \label{obj:pareto}
\end{align}
\vspace{-2em}
\begin{align}\textstyle
    &\Gamma \in \min_{\Gamma\in\Omega} \sum_{i \in [n]} \sum_{j \in [m]} \Gamma_{i,j} L_i(\theta_j) + \tau R(\Gamma),  \label{obj:ot0} \\
    &~~\Omega\triangleq\{\Gamma\in\mathbb R_+^{n\times m}: \Gamma\vec{1}_m=\alpha, \Gamma^\top\vec{1}_n=\beta\}. \label{obj:ot}
\end{align}
$\alpha \in \Delta^n$, $\beta \in \Delta^m$ are two tunable vectors on $n$- and $m$-dimensional probability simplexes, respectively. We encourage nondegeneracy of solutions by setting these vectors to follow uniform distributions, i.e., $\alpha = \mathbf{1}_n / n, \beta = \mathbf{1}_m / m$. As discussed in Section~\ref{sec:intro}, the constraint set $\Omega$ prevents the undesired outcome where all objectives are matched with a subset of solutions or all solutions optimize a subset of objectives.
To explicitly encourage the diversity among the columns in $\Gamma$, we define a regularization term $R(\cdot)$ as 
\begin{equation} \label{eq:R}
R(\Gamma)=-\sum_{i\in[n]}\max_{j\in[m]} \Gamma_{i,j}.
\end{equation}
Hence, only the maximum entry $\max_{j\in[m]} \Gamma_{i,j}$ in each row of $\Gamma$ contributes to $R(\Gamma)$. 
Under the marginal constraints imposed on $\Gamma$ in Eq~\eqref{obj:ot}, minimizing the regularization $R(\Gamma)$ leads to $\max_{j\in[m]} \Gamma_{i,j}=\nicefrac{1}{n}$ and zeros for the rest entries in each row-$i$, resulting in zero cosine similarity between different columns of $\Gamma$. Moreover, if $n\gg m$ and $n\text{ (mod) } m=0$, it has exactly $\nicefrac{n}{m}$ nonzero entries (with value $\nicefrac{1}{n}$) per column, securing an equal and disjoint partition of the $n$ objectives to the $m$ solutions, which indicates the diversity of objectives used to train the $m$ solutions.  
Eq.~\ref{obj:ot0} has a weight $\tau$ to determine the trade-off between $R(\Gamma)$ and the transport cost. 
In the unregularized case when $\tau=0$ (Eq.~\ref{obj:ot0}), the resulting $\Gamma$ would be a sparse matrix (Proposition~\ref{prop:unregularized_sparsity})~\cite{liu2022sparsity,brualdi2006combinatorial}. Although enforcing marginal constraints \textit{without} $R(\Gamma)$ already leads to improvements over the baselines (Figure~\ref{fig:diversity_comp}), 
empirically, we also showcase that leveraging this extra regularizer further benefits the diverse trade-offs among all objectives (Appendix~\ref{app:OT_diversity}).

\vspace{-0.1in}
\subsection{Algorithms for \name}

At a high level, the bi-level optimization problem described above can be decoupled into two sub-problems (over $\Gamma$ and $\theta_{1:m}$) when fixing one variable and optimizing the other. 
At each outer iteration, we first solve~\eqref{obj:ot0} exactly by running an off-the-shelf OT solver (e.g., IPOT~\citep{xwwz20}).
Then we optimize~\eqref{obj:pareto} by running a reweighted version of MGDA with a min-norm solver~\citep{mgda}. The exact algorithm is summarized in Algorithm~\ref{alg:MOST}. 

\begin{algorithm}[!h] \label{alg}
	\caption{\textbf{M}any-\textbf{O}bjective Multi-\textbf{S}olution \textbf{T}ransport}
	\begin{algorithmic}[1] \label{alg:MOST}
		\STATE \textbf{Input:} objectives $\{L_i(\cdot)\}_{i=1}^n$, $\alpha\in\Delta^n$, $\beta\in\Delta^m$, $\eta$, $K$
		\STATE \textbf{Initialize:} $m$ solutions $\theta_{1:m}$
 	\FOR {$t\in\{1,\cdots, T\}$}
		        \STATE $\Gamma^t \leftarrow$ solution of Eq.~\eqref{equ:optGamma} by an optimal transport (OT) solver given $\theta_{1:m}^t$; 
		        \FOR {$j\in\{1,\cdots, m\}$}
                        \FOR {$k\in\{1,\cdots, K\}$}
    		            \STATE $d_j\leftarrow$ Eq.~\eqref{equ:dj}, where $\lambda^*$ is achieved by a min-norm solver for  Eq.~\eqref{equ:mgda} given $\Gamma^t$ and $\theta_j$.   
                            \STATE $\theta_j \leftarrow \theta_j + \eta d_j$ \label{line:most_update};
                        \ENDFOR
                        \STATE $d_j^t \leftarrow d_j; \theta_j^t \leftarrow \theta_j$
            \ENDFOR
        \ENDFOR
        \STATE \textbf{Return} {$\theta_{1:m}^T$}
	\end{algorithmic}
\end{algorithm}

From Algorithm~\ref{alg:MOST}, in each iteration, we first optimize $\Gamma^t$ with $\theta_{1:m}^t$ fixed, i.e., finding the optimal transport (or matching) between the $n$ objectives and the $m$ models by solving the following optimal transport problem with existing algorithms~\citep{xwwz20}: 
\begin{equation}\label{equ:optGamma} \textstyle
\min_{\Gamma\in\Omega}\sum_{j\in[m]}\sum_{i\in [n]} \Gamma_{i,j}L_{i}(\theta_j) + \tau R(\Gamma).
\end{equation}
Given the maximum entry per row in $\Gamma$, Eq.~\eqref{equ:optGamma} reduces to a new optimal transport transport problem with an augmented loss $L_{i}(\cdot) + \tau \nabla R(\Gamma)$ that can be addressed by an off-the-shelf optimal transport solver. 

Fixing the optimal $\Gamma$, we then optimize a reweighted version of MGDA across objectives  $\left(\Gamma_{1, j} L_1(\theta_j), \cdots, \Gamma_{n, j} L_n(\theta_j)\right)$ for each solution $\theta_j,~j\in [m]$. To find Pareto stationary solutions, similar as MGDA, we aim to find the common-descent directions $d_{1:m}$ for $\theta_{1:m}$ to guarantee that all objective values do not increase at each iteration. This reduces to solving $m$ MGDA-type MOO problems (more background in Appendix \ref{sec:background}) in parallel, i.e., for every solution $j\in[m]$, we aim to find $d_j$ by solving 
\begin{equation} \label{equ:optd}\textstyle
    \min_{d_j}\max_
    {i\in [n]}d_j^\top\Gamma_{i,j}\nabla_{\theta_j} L_i(\theta_j)+\frac{1}{2}\|d_j\|_2^2,
\end{equation}
which rescales each objective's gradient $\nabla_{\theta_j}L_i(\theta_j)$ by $\Gamma_{i,j}$. For simplicity, we will use $\nabla L_i(\theta_j)$ to denote $\nabla_{\theta_j}L_i(\theta_j)$ in the remaining of the paper. 
The dual of Eq.~\eqref{equ:optd} is a min-norm problem over variable $\lambda \in \Delta^n$ as follows:
\begin{align}\label{equ:mgda}\textstyle
    \min_{\lambda \in \Delta^n} \left\|\sum_{i \in [n]}\lambda_i \Gamma_{i,j}\nabla L_i(\theta_j)\right\|^2, ~~\forall j \in [m],
\end{align}
which can be solved by existing Frank-Wolfe algorithms~\citep{f80}. Given the optimal dual solution $\lambda^*$ from Eq.~\eqref{equ:mgda}, the primal solution of $d_j$ ($j \in [m]$) to Eq.~\eqref{equ:optd} can be derived by the following convex combination of the $\Gamma$-weighted gradients:
\begin{equation}\label{equ:dj}\textstyle
    d_j=\sum_{i\in [n]}\lambda^*_{i} \Gamma_{i,j}\nabla L_i(\theta_j).
\end{equation}
To understand the benefits of Eq.~\eqref{equ:mgda}, let us consider the vanilla MGDA method: $\min_{\lambda \in \Delta^n} \left\|\sum_{i \in [n]}\lambda_i \nabla L_i(\theta)\right\|^2$, in which $\lambda$ may be biased towards the objective with a small gradient norm (i.e., a well-optimized objective). In \name, however, OT in Eq.~\eqref{equ:optGamma} tends to result in a large $\Gamma_{i,j}$ for a small $L_i(\theta_j)$, thus moving small gradient away from the origin in Eq.~\eqref{equ:mgda} (i.e., preventing a well-optimized objective from dominating $d_j$).

The MGDA direction $d_j$ guarantees that every objective with non-zero $\Gamma_{i,j}$ will be improved or remain the same after updating $\theta_j$. 
After obtaining $d_j$, we update the model parameters $\theta_j$ by moving along this direction (Line~\ref{line:most_update} in Algorithm~\ref{alg:MOST}). Optionally, we can also run such gradient descent steps for $K$ steps in practice under the same $\Gamma$. In our convergence analysis (Section~\ref{sec:convergence}), we allow for $K>1$ and assume a full batch setting with $\nabla L_i(\theta_j)$ evaluated on all the local data of problem $i$, for all $j$'s. Empirically, we report our experiment results based on mini-batch gradients in Section~\ref{sec:exp}. 

\vspace{-0.1in}
\subsection{Extension to Few-Objective ($n < m$) Cases}
\label{sec:few-obj-extention}
The \name formulation discussed before is mainly motivated by the challenges of having many objectives in MOO. When $n \gg m$, the diversity of the $m$ models can be achieved by enforcing the two marginal constraints in the optimal transport problem. However, the diversity cannot be fully guaranteed when $n\ll m$. For example, when $n=2$, by even applying uniform distributions for $\alpha$ and $\beta$ (i.e., the strongest constraints for diversity), a trivial but feasible solution of $\Gamma=[\mathbf{1}_{m/2}, \mathbf{0}_{m/2}; \mathbf{0}_{m/2}, \mathbf{1}_{m/2}]$ can collapse the $m$ models to duplicates of only two different models, i.e., one minimizing the first objective while the other minimizing the second. To address this problem, we create $(n'-n)\gg m$ dense interpolations of the $n$ objectives ($n' \gg m$) by sampling $(n'-n)$ groups of convex combination weights $w_{n+1:n'}$ on the simplex, i.e., $w_i\in\Delta^{n}$ drawn from a Dirichlet distribution. Then each auxiliary objective $L_i(\cdot)$ can be defined as
\begin{equation}\textstyle
\setlength{\abovedisplayskip}{3pt}
\setlength{\belowdisplayskip}{3pt}
    L_i(\theta)\triangleq \sum_{l\in [n]} w_{i,l} L_l(\theta),~~\forall~i=n+1,\cdots,n'.
\end{equation}

Thereby, we increase the number of objectives to $n'\gg m$ and \name can be applied to achieving diverse models for optimizing the $n'$ interpolations of the original $n$ objectives.
This strategy can be explained as maximizing the coverage of the $m$ models over the dense samples of the Pareto front regions using $n'$ random reference vectors. We report the results of this technique with applications on a toy problem and fairness/utility trade-offs in Appendix~\ref{app:less_obj_exp}.

\vspace{-0.05in}
\subsection{A Practical Solution-Specialization Curriculum}
\label{sec:curriculum}

In scenarios with diverse objectives, each corresponding to a distinct domain or unique dataset, a practical demand arises: optimizing multiple models and turning them into a mixture of specialized experts (i.e., models).
This allows each input sample to select the best expert(s) for inference.
Such ``objective selecting expert/models'' or ``objective choice routing'' strategy corresponds to removing $\Gamma \mathbf{1}_n=\beta$ from the constraint set $\Omega$ in Eq.~\eqref{obj:ot}. But it may lead to training imbalance among the $m$ models, e.g., one model is chosen by most objectives while other models get nearly zero optimization. As training proceeds, the winning model(s) trained by more objectives tend to be chosen even more frequently; hence joint optimization of $m$ models can collapse to training one single model. 

To address this challenge, we propose to design a curriculum of varying the marginal constraints that progressively changes $\alpha$ and $\beta$ for different training stages. Specifically, in the early stage, we mainly focus on enforcing a uniform marginal distribution $\beta$ so that every model can receive sufficient training from multiple objectives. By relaxing the other marginal constraint $\alpha$ over $n$ objectives to be slightly non-uniform, the $m$ models have more degree of freedom to choose the objectives on which they perform the best (i.e., ``model selecting objective'' or ``model choice routing'') and we allow for slight imbalance among objectives.\footnote{Less-selected objectives can be more difficult and it is more desirable to learn them later when models become more powerful.} During later stages, the curriculum instead focuses more on enforcing the marginal distribution $\alpha$ to be uniform so that every objective has to be covered by sufficient models. On the other hand, the marginal constraint $\beta$ can be relaxed in this stage since they are close to convergence.  
Empirical results in Appendix~\ref{app:curriculum_learning} demonstrate the effectiveness of our curriculum strategy scheduling $\alpha$ and $\beta$.

\vspace{-0.1in}
\section{Properties of \name} \label{sec:convergence}

In this section, we discuss how \name encourages diverse solutions, as well as provide convergence guarantees of our Algorithm~\ref{alg:MOST}. In this work, we define solution diversity through the diversity of $\Gamma$, as stated below. 

\begin{definition}[Diverse Solutions] \label{def:diversity}
    We informally say that a set of solutions $\{\theta_i\}_{i\in [n]}$ are more diverse if 
    $\sum_{j=1}^{m} \sum_{z=1,z \neq j}^{m} cos(\Gamma_{\cdot,j}, \Gamma_{\cdot,z})$ is small with some feasible $\Gamma$, where $cos(\pmb x, \pmb y) = \frac {\pmb x \cdot \pmb y}{||\pmb x|| \cdot ||\pmb y||}$.
\end{definition}
This definition captures the goal of 
diversifying solution specialization and thus
finding at least a good enough model $j$ (corresponding to $\argmax_{j \in [m]} \Gamma_{i,j}$) for every objective $i$, since the losses are reweighted by $\Gamma$. 
We note that fixing the losses $\{L_i(\theta_j)\}_{i \in [n], j\in [m]}$, when solving for $\Gamma$ without any regularization (i.e., $\tau$=0 in Eq.~\eqref{equ:optGamma}), the objective inherently results in sparse solutions, as stated in the proposition below.

\begin{proposition}[Sparsity of $\Gamma$ \cite{brualdi2006combinatorial}] \label{prop:unregularized_sparsity}
Any $\Gamma$ that solves the optimal transport problem Eq.~\eqref{equ:optGamma} with $\tau$=0 has at most $n+m-1$ zon-zero entries.
\end{proposition}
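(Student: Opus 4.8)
The plan is to recognize the $\tau=0$ problem in Eq.~\eqref{equ:optGamma} as a standard (Monge--Kantorovich) linear-programming transportation problem with $n$ supply nodes (the objectives, with supplies $\alpha$) and $m$ demand nodes (the solutions, with demands $\beta$), and cost matrix $C_{i,j} = L_i(\theta_j)$, which is a fixed constant matrix once $\theta_{1:m}$ are frozen. The support of a transportation plan corresponds to a subgraph of the complete bipartite graph $K_{n,m}$. The classical fact I would invoke (this is exactly the combinatorial-matrix-theory statement cited to \cite{brualdi2006combinatorial}) is that a \emph{basic feasible solution} of a transportation LP has support contained in the edge set of a forest on the $n+m$ nodes, hence at most $(n+m)-1$ nonzero entries (one less than the number of vertices, since a forest on $N$ vertices has at most $N-1$ edges). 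Since the feasible polytope $\Omega$ is nonempty (e.g. the product plan $\alpha\beta^\top$ lies in it) and bounded, the LP attains its minimum at a vertex of the polytope, i.e., at a basic feasible solution; this yields one optimal $\Gamma$ with at most $n+m-1$ nonzeros.

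The key steps, in order, would be: (i) write Eq.~\eqref{equ:optGamma} with $\tau=0$ explicitly as $\min_{\Gamma\in\Omega}\langle C,\Gamma\rangle$, a linear objective over the polytope $\Omega$ of Eq.~\eqref{obj:ot}; (ii) argue the polytope is nonempty and compact, so an optimum is attained at a vertex; (iii) characterize the vertices: the $n+m$ marginal equality constraints $\Gamma\vec 1_m=\alpha$, $\Gamma^\top\vec 1_n=\beta$ have a constraint matrix of rank exactly $n+m-1$ (the all-ones row-sum and column-sum totals coincide, giving one linear dependency), so a vertex has at most $n+m-1$ coordinates that are not forced to $0$ by the active nonnegativity constraints; (iv) equivalently, via the forest argument: if the support of $\Gamma$ contained a cycle in $K_{n,m}$, one could push mass alternately $\pm\epsilon$ around the cycle without changing either marginal, exhibiting $\Gamma$ as a strict convex combination of two feasible points, contradicting vertexhood --- hence the support is a forest on $n+m$ vertices and has $\le n+m-1$ edges. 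Either (iii) or (iv) suffices; I would likely present the cycle-cancelling argument (iv) since it is self-contained and makes the ``$n+m-1$'' bound transparent.

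The main obstacle --- really the only subtle point --- is a matter of phrasing rather than mathematics: the statement should be read as ``there exists an optimal $\Gamma$ with at most $n+m-1$ nonzeros,'' not ``every optimal $\Gamma$,'' because when the cost matrix $C$ is degenerate (e.g. several losses tie) the optimal face can be higher-dimensional and contain dense plans; the clean bound holds at the vertices of that optimal face. In the algorithm this is benign, since an off-the-shelf LP/OT solver returning a basic feasible solution automatically lands on such a vertex. I would note this caveat briefly and then invoke the standard result; no heavy computation is needed beyond the elementary cycle-cancelling perturbation, which I would sketch in one or two lines rather than belabor.
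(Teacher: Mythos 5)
Your proposal is correct and is precisely the classical transportation-polytope argument that the paper's citation to Brualdi points to; the paper itself gives no proof, so your vertex/forest (cycle-cancelling) derivation is exactly the intended one. Your caveat is also well taken: as stated, ``any $\Gamma$ that solves'' should really read ``some optimal $\Gamma$'' (a basic feasible solution), since a degenerate cost matrix can yield a higher-dimensional optimal face containing denser plans.
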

Intuitively, sparse $\Gamma$'s that satisfy the marginal constraint $\Omega$ would prevent the scenarios where only a subset of objectives are well-optimized. In Appendix~\ref{app:OT_diversity}, we empirically verify that even without $R(\Gamma)$, the sparse transport gives diverse solutions that balance all objectives. Additionally, setting $R(\Gamma)$ to be the negative of our diversity measure (Definition~\ref{def:diversity}), we can further encourage diversity as it is obvious to show that
\begin{align*}
    \sum_{i\in[n]} \max_{j\in[m]} \Gamma^*_{i,j} (\tau) \geq \sum_{i\in[n]} \max_{j\in[m]} \Gamma^*_{i,j} (0),
\end{align*}
where $\Gamma^*(\tau)$ is defined as the optimal solution of Eq.~\eqref{equ:optGamma} with a regularization constant $\tau$. We provide a detailed investigation of training dynamics and diverse assignments between objectives and models in Section~\ref{sec:sparsity_analysis}.

\subsection{Convergence}
In this part, we analyze the convergence of \name in Algorithm~\ref{alg:MOST} for both strongly-convex and non-convex functions. The alternate minimization scheme poses additional challenges to our analysis compared with prior convergence results in MOO~\citep{fliege2019complexity}.

\begin{theorem}[Convex and Non-Convex] \label{thm:convergence_non_convex}
    Assume each objective $L_i(\theta)$ is $\nu$-smooth. Given marginal distribution constraints $\alpha \in \Delta^n$ and $\beta \in \Delta^m$, under a learning rate $\eta=\frac{1}{2\nu}$, after running Algorithm~\ref{alg:MOST} for $T$ outer iterations with full batch multi-gradient descent, we have that
    \begin{align}
        \frac{1}{T} \sum_{t\in [T]} \sum_{j \in [m]} \beta_j \|d_j^t\|^2 \leq O\left(\frac{\nu}{T}\right). \nonumber
    \end{align}
\end{theorem}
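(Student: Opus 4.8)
The plan is to treat each outer iteration $t$ as $m$ parallel runs of $K$ reweighted-MGDA steps under a fixed $\Gamma^t$, derive a per-solution descent inequality from $\nu$-smoothness, then aggregate across solutions with weights $\beta_j$ and telescope over $t$. First I would fix $t$ and $j$ and recall that the reweighted-MGDA direction $d_j$ from Eq.~\eqref{equ:dj} solves Eq.~\eqref{equ:optd}; the standard MGDA argument then gives that $d_j$ is a common descent direction for all $\Gamma^t$-weighted objectives, i.e.\ $-d_j^\top \Gamma^t_{i,j}\nabla L_i(\theta_j) \ge \|d_j\|^2$ for every $i$ with $\Gamma^t_{i,j}>0$ (this is exactly the KKT/min-norm characterization). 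Combined with $\nu$-smoothness of $L_i$ and the step $\theta_j \leftarrow \theta_j + \eta d_j$, a one-step Taylor bound yields $\Gamma^t_{i,j} L_i(\theta_j^{k+1}) \le \Gamma^t_{i,j} L_i(\theta_j^{k}) - (\eta - \tfrac{\nu\eta^2}{2}\,\Gamma^t_{i,j})\|d_j^k\|^2$ for the relevant $i$; with $\eta = \tfrac{1}{2\nu}$ and $\Gamma^t_{i,j}\le 1$ the coefficient is at least $\eta/2 = \tfrac{1}{4\nu}$.

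Next I would combine these across the $n$ objectives into a single scalar potential per solution. The natural choice is $\Phi_j(\theta_j) \triangleq \sum_{i\in[n]} \Gamma^t_{i,j} L_i(\theta_j)$, which is the transport cost that the OT step in Eq.~\eqref{equ:optGamma} also controls; summing the per-$i$ descent inequalities (using that $d_j$ decreases every active component) gives $\Phi_j(\theta_j^{k+1}) \le \Phi_j(\theta_j^{k}) - \tfrac{1}{4\nu}\|d_j^k\|^2$, and hence over the $K$ inner steps $\Phi_j(\theta_j^{t+1}) \le \Phi_j(\theta_j^{t}) - \tfrac{1}{4\nu}\sum_{k}\|d_j^{t,k}\|^2 \le \Phi_j(\theta_j^{t}) - \tfrac{1}{4\nu}\|d_j^t\|^2$, where $d_j^t$ is the last inner direction (or, if one prefers, one keeps the full inner sum, which is only stronger). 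Multiplying by $\beta_j$ and summing over $j$ produces $\sum_j \beta_j \Phi_j(\theta_j^{t+1}) \le \sum_j \beta_j \Phi_j(\theta_j^{t}) - \tfrac{1}{4\nu}\sum_j \beta_j\|d_j^t\|^2$.

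The remaining subtlety — and the step I expect to be the main obstacle — is that $\Gamma^t$ changes between outer iterations, so $\Phi_j$ is not a fixed function and the telescoping is not immediate. Here I would use the optimality of the OT subproblem: $\Gamma^{t+1}$ minimizes $\sum_{i,j}\Gamma_{i,j}L_i(\theta_j^{t+1}) + \tau R(\Gamma)$ over $\Omega$, so $\sum_{i,j}\Gamma^{t+1}_{i,j}L_i(\theta_j^{t+1}) + \tau R(\Gamma^{t+1}) \le \sum_{i,j}\Gamma^{t}_{i,j}L_i(\theta_j^{t+1}) + \tau R(\Gamma^{t})$. This says that replacing $\Gamma^t$ by $\Gamma^{t+1}$ can only decrease the (regularized) aggregate transport cost, so defining the global potential $\Psi^t \triangleq \sum_{i,j}\Gamma^t_{i,j}L_i(\theta_j^t) + \tau R(\Gamma^t)$ — or the $\beta$-weighted analogue matching the theorem's left-hand side — I get a genuine monotone-decreasing chain $\Psi^{t+1} \le \sum_{j}\beta_j\Phi_j(\theta_j^{t+1})+\tau R(\Gamma^{t+1})\cdots \le \Psi^t - \tfrac{1}{4\nu}\sum_j\beta_j\|d_j^t\|^2$ (the bookkeeping with the $\beta_j$ factors and with $R$ needs care, since $R$ is the diversity regularizer of Eq.~\eqref{eq:R} and one must check it does not spoil the inequality — using $\tau\ge0$ and the OT optimality it only helps). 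Finally I would telescope from $t=1$ to $T$, use that $\Psi^t$ is bounded below (the losses $L_i$ are bounded below, e.g.\ nonnegative, and $R$ is bounded since $\Gamma\in\Omega$ is compact), and divide by $T$ to obtain $\tfrac{1}{T}\sum_{t\in[T]}\sum_{j\in[m]}\beta_j\|d_j^t\|^2 \le \tfrac{4\nu(\Psi^1 - \Psi^\star)}{T} = O(\nu/T)$, which is the claim.
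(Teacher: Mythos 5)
Your proposal follows essentially the same route as the paper's proof: a per-step descent inequality obtained from the common-descent property of the reweighted MGDA direction in Eq.~\eqref{equ:dj} combined with $\nu$-smoothness, the OT-optimality comparison $\sum_{i,j}\Gamma^{t+1}_{i,j}L_i(\theta_j^{t+1})+\tau R(\Gamma^{t+1})\le \sum_{i,j}\Gamma^{t}_{i,j}L_i(\theta_j^{t+1})+\tau R(\Gamma^{t})$ to handle the changing transport plan, and a telescoping sum using boundedness of the losses and of $R$ on the compact set $\Omega$. The one place your bookkeeping does not go through as written is how the $\beta_j$ weights enter: you multiply the potentials $\Phi_j$ by $\beta_j$ externally, but the OT optimality inequality compares the \emph{unweighted} transport costs, so the link $\Psi^{t+1}\le\sum_j\beta_j\Phi_j(\theta_j^{t+1})+\tau R(\Gamma^{t+1})$ you need for the monotone chain is not available. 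The paper's fix is to weaken the per-objective descent inequality so that it retains the factor $\Gamma^t_{i,j}$ on the right-hand side (using $\Gamma^t_{i,j}\le 1$), namely $\Gamma^t_{i,j}\bigl(L_i(\theta_j^{t+1})-L_i(\theta_j^t)\bigr)\le-\tfrac{\eta(1-\nu\eta)}{2}\Gamma^t_{i,j}\|d_j^t\|^2$; summing over $i$ then produces $\sum_{i}\Gamma^t_{i,j}=\beta_j$ directly from the column-marginal constraint in $\Omega$, so the $\beta_j$ weighting emerges from the unweighted potential rather than being imposed on it. With that adjustment the rest of your argument (telescoping, $\eta=\tfrac{1}{2\nu}$ giving a constant $C=O(\nu)$, division by $T$) matches the paper, including the treatment of $K>1$ inner steps.
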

We defer the full proofs to Appendix~\ref{app:convergence:non_convex}. The main step involves leveraging the property of the common descent direction $d_j$ obtained from Eq.~\ref{equ:dj}, i.e., for any $i\in [n]$  and $j\in[m]$, we have
\begin{align*}
    \Gamma_{i,j}^t \nabla L_i(\theta_j^t)^{\top} d_j^t \leq -\frac{1}{2} \|d_j^t\|^2.
\end{align*}
 
Our convergence rate is the same as that of normal gradient descent for non-convex and smooth problems under a fixed learning rate.
Note that our result is based on using full gradients of each objective when solving the min-norm problem (Eq.~\eqref{equ:optd}).  In practical implementation, we use $K >1$ to run multiple iterations to update the model parameters for each objective locally, and use stochastic mini-batch gradients in  Eq.~\eqref{equ:optd}.

\textbf{Strongly-convex cases.} When the losses $L_i(\theta)~(i\in[n])$ is smooth and additionally strongly convex with respect to $\theta\in \mathbb{R}^d$, we show that our proposed algorithm (using full gradients) can converge to Pareto stationary points with respect to a subset of matching objectives under a more restricted assumption on the stability of objective-model matching. Please see Theorem~\ref{thm:convergence_convex} in the appendix for complete statement and a detailed proof.

\vspace{-0.1in}
\section{Assignment Dynamics during Training}
\label{sec:sparsity_analysis}

In this section, we empirically examine the optimal assignment matrix $\Gamma$ during training, highlighting its advantages on objective-solution matching and its impact on solution specialization.

\textbf{Fast convergence of $\Gamma$.}
We observe a rapid convergence of $\Gamma$ by tracking its evolution during training with Kullback Leibler (KL) divergence between successive iterations (Figure~\ref{fig:average_vs_oracle}(a)), detailed in Appendix~\ref{app:exp_detail}.

\begin{figure}[!h]
\vspace{-0.1in}
\begin{center}
\includegraphics[width=0.5\textwidth]{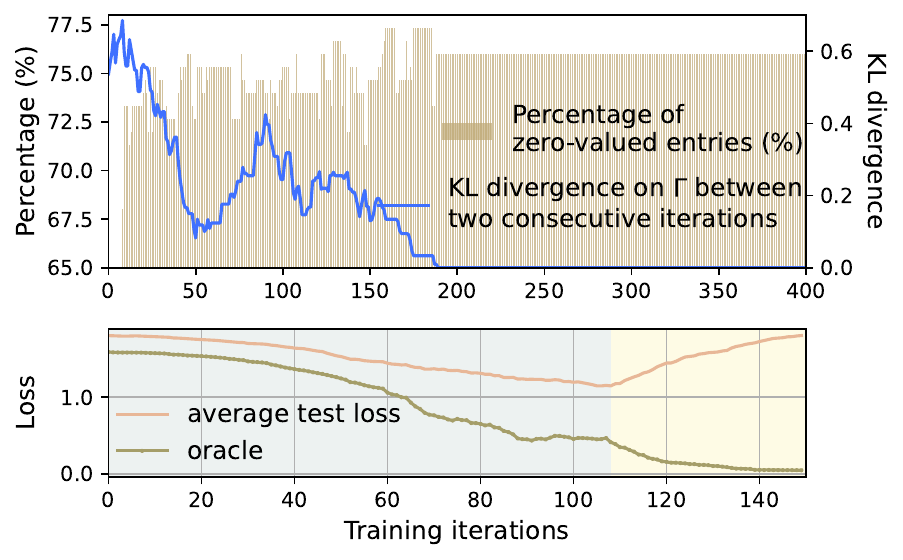}
\end{center}
\setlength{\abovecaptionskip}{-12pt}
\setlength{\belowcaptionskip}{-12pt}
\vspace{-1.5em}
\caption{(a) Left y-axis: Percentage of zero-valued entries within $\Gamma$. Right y-axis: symmetric
KL divergence between $\Gamma$ in successive iterations. $\Gamma$ \textbf{quickly converges to a sparse matrix}.  
(b) Test loss averaged over all solutions vs. test loss of the best-performing solution for each objective (oracle). As training proceeds, the average loss rises while the oracle loss continues to decrease, indicating a trend of \textbf{solution specialization and diversification}.\looseness-1
}
\vspace{-0.2in}
\label{fig:average_vs_oracle}
\end{figure}

\textbf{Sparsity of $\Gamma$.}
As training proceeds, $\Gamma$ becomes more sparse, with nearly 75.00\% zero entries (Figure~\ref{fig:average_vs_oracle}(a)). This suggests that $\Gamma$ encourages each solution to focus on only a subset of objectives.

\textbf{Sparsity promotes specialization.}
In Figure~\ref{fig:average_vs_oracle}(b), we visualize the mean test loss averaged across all solutions evaluated at all the objectives, and the loss when selecting the best-performing solution for each objective (denoted as \textit{oracle}). We see that the mean loss initially increases and then decreases, while the oracle loss by selecting the best expert for each objective consistently decreases. These trends indicate that using all the solutions to serve all the objectives is suboptimal, while objective-specific solutions can focus on subsets of objectives, ultimately contributing to a more complementary and effective overall solution set.

\vspace{-0.1in}
\section{\name Applications} \label{sec:exp}

In this section, we apply \name to various ML applications where $n \gg m$. Though we do not specifically focus on the (less challenging) settings of $m \gg n$, \name can naturally generate diverse solutions for those problems as well. We defer the readers to Appendix~\ref{app:less_obj_exp}, where we show superior performance of \name relative to existing methods for the $m \gg n$ case.

\vspace{-0.1in}
\subsection{Experimental Setup}
\label{sec:exper-setup}

This section provides an overview of the experimental setup. 
We describe the specific setup for each application in their respective sections, and provide more details such as hyperparameter values in Appendix~\ref{app:exp_detail}.

For all the applications, we at least compare with the following baselines to generate $m$ solutions.
\begin{itemize}[leftmargin=*, topsep=0pt]
\setlength{\itemsep}{0pt}
\setlength{\parsep}{0pt}
\setlength{\parskip}{0pt}
    \item \textbf{Linearization-based MOO} where we optimize over a convex combination of all objectives with $m$ randomly-sampled sets of weights. Minimizing a simple average of all the losses (empirical risk minimization over objectives) is a specific instance of this with uniform weights.
    \item Running \textbf{MDGA}~\citep{mgda} independently for $m$ times with different random seeds. 
\end{itemize} 
These two are denoted as `Linearization' and `MGDA' below.
More task-specific baselines will be introduced later.

\textbf{Evaluation metrics.} 
We use task-specific evaluation metrics, such as average accuracy (or tail accuracy) across all objectives or hypervolume~\cite{zitzler1999multiobjective} to measure diversity.
Each run is repeated three times using different seeds, and we report the mean and standard deviation.

\vspace{-0.05in}
\subsection{Federated Learning}
\vspace{-0.05in}
\label{sec:exp_fl}

One important scenario where $n\gg m$ is the cross-device federated learning application, where we jointly learn $m$ models over a heterogeneous network of $n$ remote devices. 
The devices generate local data following non-identical distributions; hence we view the finite sum of empirical losses on each device as one objective, i.e., $\textstyle L_i(\theta) := \frac{1}{v_i} \sum_{s=1}^{v_i} l_s(\theta)$ where $v_i$ is the number of local samples on device $i\in [n]$, and $l_s$ denotes the individual local loss on sample $s$. 
\name seamlessly integrates with the decentralized setting of federated learning by computing client-specific local updates ($\theta_{1:m}$) and aggregating them to update the global model by $\Gamma$. 
Moreover, since clients are diverse, it is expected that the solution diversity benefits of \name will contribute significantly to the final performance.

\begin{figure*}[htbp]
\vspace{-0.1in}
\centering
\setlength{\abovecaptionskip}{24pt}
\includegraphics[width=1.0\textwidth]{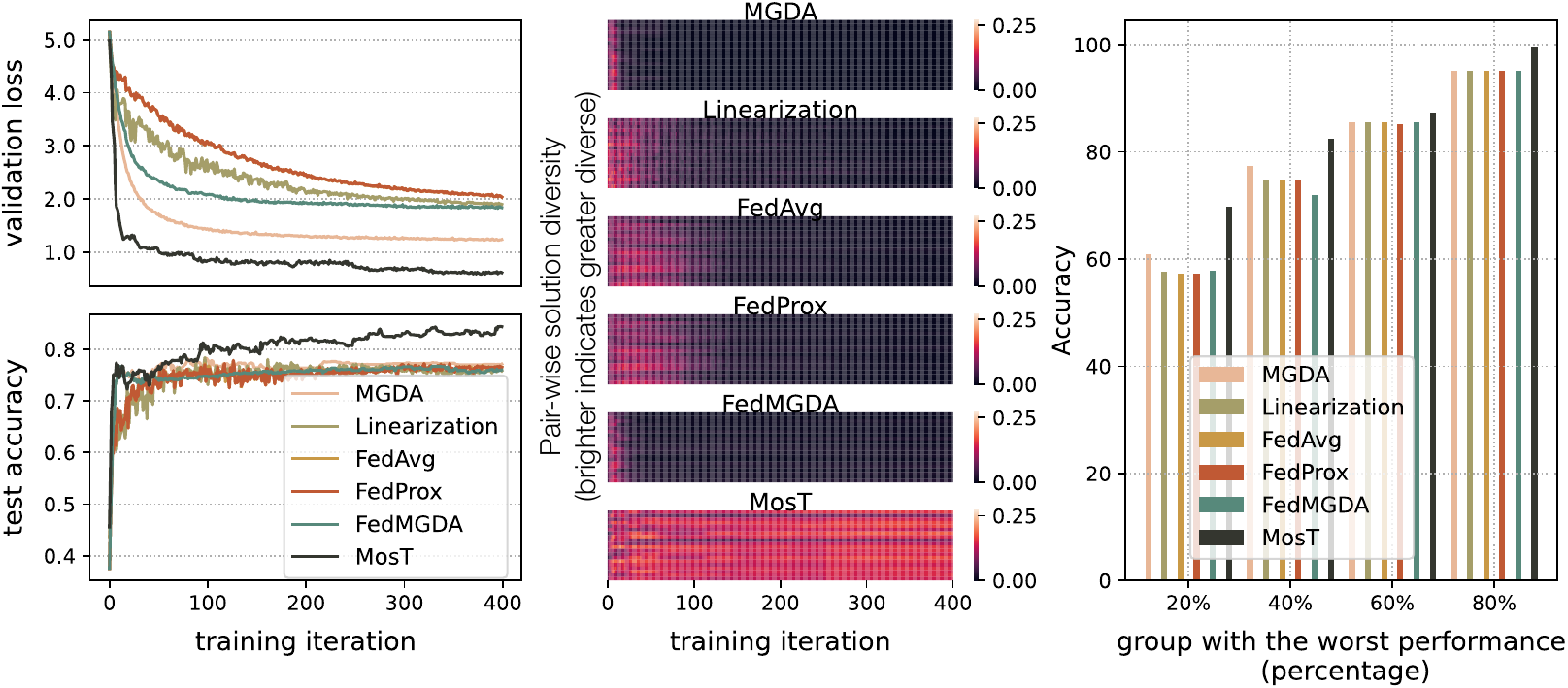}
 \vspace{-2.5em}
  \caption{(a) \textbf{Training loss and test accuracy} curves of each method. \name demonstrates faster convergence with higher accuracy. (b) \textbf{Diversity} of solutions during training: each block on a column visualizes the KL divergence of a pair of solutions (brighter indicates a larger value). \name produces more diverse solutions. (c) \textbf{Fairness:} Accuracy of the worst 20\%, 40\%, 60\%, and 80\% client groups. Diversity leads to better tail performance among all the objectives.
  }
  \label{fig:fl_all}
\end{figure*}

\begin{table*}[h!]
\centering
\vspace{-12pt}
\caption{Mean accuracy across clients (mean and std across 3 runs) on federated learning datasets. \name outperforms the strong baselines.\looseness-1}
\vspace{0.3em}
\resizebox{0.82\linewidth}{!}{
\begin{tabular}{lccccccc}
\toprule
Dataset & MGDA & Linearization & FedAvg & FedProx & FedMGDA+ & \name w/o $R(\Gamma)$ & \name \\ 
\midrule
Syn (0.0, 0.0)
&  77.22$_{\pm 0.41}$    & 75.91$_{\pm 0.37}$        & 75.71$_{\pm 0.51}$ & 75.60$_{\pm 0.42}$  & 75.26$_{\pm 1.21}$   & 83.09$_{\pm 0.87}$ & \textbf{84.25}$_{\pm 0.51}$ \\ 
Syn (0.5, 0.5)
&  87.09$_{\pm 0.29}$    & 87.18$_{\pm 0.27}$        & 86.26$_{\pm 0.61}$ & 86.13$_{\pm 0.39}$  & 85.21$_{\pm 1.42}$   & 89.07$_{\pm 0.63}$ & \textbf{89.99}$_{\pm 0.52}$ \\ 
Syn (1.0, 1.0)
&  90.52$_{\pm 0.13}$    & 89.87$_{\pm 0.51}$        & 88.12$_{\pm 0.75}$ & 87.58$_{\pm 1.36}$  & 87.16$_{\pm 1.09}$   & 91.70$_{\pm 0.02}$ & \textbf{92.21}$_{\pm 0.08}$  \\ 
\midrule
FEMNIST  &  78.86$_{\pm 1.43}$    & 72.62$_{\pm 0.65}$        & 72.47$_{\pm 0.19}$ & 72.45$_{\pm 0.06}$  & 80.08$_{\pm 0.12}$   &  80.94$_{\pm 0.34}$ & \textbf{81.16}$_{\pm 0.03}$  \\ 

\bottomrule
\end{tabular}}
\label{tab:fl}
\vspace{-0.15in}
\end{table*}

We conduct experiments on synthetic data and Federated Extended MNIST (FEMNIST)~\citep{cohen2017emnist,caldas2018leaf}, where the number of objectives $n=30$ and $n=206$, respectively. 
We experiment on three synthetic datasets, denoted as Syn $(\rho_1, \rho_2)$, with different $\rho_1$ and $\rho_2$ controlling heterogeneity of local models and data, as detailed in Appendix~\ref{app:exp_detail}.
We compare \name with baselines described in Section~\ref{sec:exper-setup} and state-of-the-art federated learning algorithms, including FedAvg~\citep{mcmahan2017communication}, FedProx~\citep{li2020federated}, and FedMGDA+~\citep{hu2022federated}. We run each algorithm $m$ times with different random initializations. It is worth noting that during evaluation, for all methods, we let each device pick a best model out of $m$ solutions based on the validation set, and compute its performance. 
We then report the average and the quantile accuracy across all devices. 
As the results shown in Table~\ref{tab:fl}, \name outperforms the baselines by a large margin on all datasets. Furthermore, we have the following observations.

\textbf{\name results in significant convergence improvements.}
Figure~\ref{fig:fl_all}(a) compares the training loss and test accuracy curves of different algorithms. 
Notably, \name outperforms baselines with a lower training loss (faster convergence) and higher test accuracy (better generalization to unseen data).

\textbf{\name maintains diversity throughout the training process.}
We analyze how the diversity of solutions evolves for different algorithms.
Diversity is quantified using the KL divergence between predictions of any pair of solutions generated by each algorithm.
Figure~\ref{fig:fl_all}(b) shows that initially, all algorithms exhibit high diversity due to randomized initialization. 
However, baselines witness a notable decrease in solution diversity during training. 
In contrast, \name maintains high diversity throughout the training process.

\textbf{\name promotes fairness in FL.}
\name's improved diversity is anticipated to benefit clients typically overlooked by other algorithms, thus enhancing fairness. 
To validate, we calculate the accuracy of the worst 20\%, 40\%, 60\%, and 80\% clients for each algorithm.
As depicted in Figure~\ref{fig:fl_all}(c), \name outperforms the baselines by a larger margin for clients with worse performance, which demonstrates that the diversity of \name effectively promotes fairness in FL.

Furthermore, our study reveals that \name strategically assigns diverse solutions for inference, preventing the collapse phenomenon in MOO (detailed in Appendix~\ref{app:OT_ablation}).

\subsection{Multi-Taks Learning}

\name seamlessly extends its capabilities to multi-task learning by treating each task as an individual objective. 
Our experiments explore two real-world datasets, Office-Caltech10~\cite{saenko2010adapting,griffin2007caltech} and DomainNet~\cite{peng2019moment} with $n = 4$ and 6 objectives, respectively. In this application, the total number of solutions is $m = 4$. 
We conduct a thorough comparison with three state-of-the-art multi-task learning approaches: MGDA, Linearization-based MOO, and EPO which is based on user preference vectors~\cite{mahapatra2020multi}. 
Similar to federated learning datasets, we select the best-performing solutions over the validation set for inference and report the accuracy averaged across all tasks.
The results are shown in Table \ref{tab:mtl}, which demonstrates \name's superiority over existing approaches.

\begin{table}[h!]
\vspace{-0.25in}
\centering
\caption{\textbf{Multi-task Learning:} Average accuracy across all tasks (mean and std across 3 runs) on Office-Caltech10 and DomainNet. 
}
\resizebox{1.0\linewidth}{!}{
\begin{tabular}{lcccc}
\toprule
Dataset & MGDA & Linearization & EPO & \name \\ 
\midrule
Office-10
&  80.74$_{\pm 0.44}$    &  61.26$_{\pm 0.67}$             &  61.05$_{\pm 1.09}$   &  \textbf{82.98}$_{\pm 0.51}$ \\ 
DomainNet &  65.81$_{\pm 0.37}$    &     57.15$_{\pm 0.17}$      &   58.55$_{\pm 0.37}$ &  \textbf{67.65}$_{\pm 0.55}$    \\ 
\bottomrule
\end{tabular}}
\label{tab:mtl}
\vspace{-0.15in}
\end{table}

\subsection{Mixture-of-Prompt Learning}

Another application is prompt learning for language models~\cite{qin2021learning}, where solutions need to perform and generalize well with diverse instances.
To address prompt learning, \name trains $m$ soft prompts to handle each training instance as a distinct objective. 
Thus, $n$ equals the total number of training instances.
We define the objective function as $\textstyle L_i(\theta) := l_i(\theta)$, where $l_i$ represents the loss on the individual instance $i$. 
This formulation allows \name to tailor its learning process to each specific training sample, providing adaptability to diverse linguistic nuances present in tasks.
We experiment on three datasets from the SuperGLUE benchmark~\cite{wang2019superglue}, a collection of challenging English language understanding tasks.
For all datasets, we sample $n = 128$ training instances, while generating $m = 3$ soft prompts.

Note that \name focuses on training complementary solutions for multiple objectives, rather than on their assignment. 
Simple ways, such as selecting the best-performing solution over the validation set, are promising for scenarios like federated learning and multi-task learning.
However, treating each instance as an objective poses challenges in solution selection during inference. 
To address this, we train a dispatcher for each algorithm to learn correlations between prompts and instances (implementation details in Appendix~\ref{app:exp_detail}), selecting the highest correlated one for inference.
We report average accuracy across all tasks, in Table~\ref{tab:prompt}.
The results demonstrate that \name exhibits a significantly better ability to generalize to unseen instances compared to baselines.

\begin{table}[h!]
\vspace{-0.15in}
\centering
\caption{\textbf{Mixture-of-Prompt Learning:} Test accuracy on three datasets of SuperGLUE benchmark (mean and std across 3 runs). 
}
\resizebox{0.9\linewidth}{!}{
\begin{tabular}{lccc}
\toprule
Task & MGDA & Linearization &  \name \\ \midrule
BoolQ
&  62.69$_{\pm 0.71}$   & 61.30$_{\pm 0.09}$   &  
\textbf{67.03}$_{\pm 0.49}$
\\ 

MultiRC
&  
60.86$_{\pm 0.50}$        &  58.79$_{\pm 1.23}$   &  \textbf{63.78}$_{\pm 0.15}$ \\ 
WiC
&  55.28$_{\pm 0.89}$   &    57.16$_{\pm 1.27}$    &  \textbf{62.38}$_{\pm 0.31}$ \\ \bottomrule

\end{tabular}}
\label{tab:prompt}
\end{table}

\vspace{-0.1in}

\subsection{Ablation Studies}
\label{sec:ablation}

We conduct extensive ablation studies to validate the design of \name, examining the necessity of OT (Appendix~\ref{app:OT_ablation}) and MGDA (Appendix~\ref{app:MGDA_ablation}), along with specific designs like solution-specialization curriculum (Appendix~\ref{app:curriculum_learning}) and diversity encouragement (Appendix~\ref{app:OT_diversity}).
These studies offer insights into the effectiveness of components of \name:

\textbf{Necessity of OT and MGDA.} 
Comparing OT-generated and randomly generated weights reveals the necessity of OT for achieving balanced objective-solution matching. Additionally, MGDA consistently outperforms its linearization-based alternative, highlighting its effectiveness in parameter updates for weighted multi-objective optimization.

\textbf{Effectiveness of curriculum.} 
Our proposed scheduling of the marginal constraints (Section~\ref{sec:curriculum}) boosts performance by over 2.00\% and enhances training stability.

\textbf{Benefits of diversity-encouragement regularization.} 
It improves performance and enhances fairness.

\textbf{Preventing collapse with OT.}
Comparison of three strategies introduced in Section~\ref{sec:curriculum}, \name, ``objective selecting model'', and ``model selecting objective'', reveals the collapse phenomenon in MOO, where limited solutions dominate all objectives. In contrast, \name using OT with a two-way matching approach achieves a more balanced distribution of objectives among models throughout training.

\vspace{-0.1in}
\subsection{Runtime Comparisons}

During implementation, we enhance the computational efficiency of our algorithm employing several simple practices: 
1) detecting early convergence of OT, 2) initializing transport plans using prior computations, 3) running MGDA on a subset of model parameters like the final layer's bias term in a neural network, and 4) effectively reducing the number of objectives in MGDA through the inherent sparsity of regularized OT.

Table~\ref{tab:fl_running_time} presents the end-to-end runtime (in seconds) of different approaches on federated learning datasets with $n = 30$ and $n = 206$. 
We see that the running time of \name is comparable to that of baseline methods. 
Notably, the time needed for OT across datasets is negligible due to the use of existing packages and the fast convergence of $\Gamma$ (shown in Section~\ref{sec:sparsity_analysis}), accounting for less than 1\% of the total time. 
See Appendix~\ref{app:time} for a complete analysis of computation time in other applications.

\begin{table}[h!]
\vspace{-0.25in}
\centering
\caption{\textbf{Runtime} comparison (sec) on federated learning datasets.\looseness-1
}
\vspace{0.3em}
\resizebox{1.00\linewidth}{!}{
\begin{tabular}{lcccccc}
\toprule
Dataset & MGDA & Linearization & FedAvg 
& FedMGDA+ & \name   \\ 
\midrule
Syn (0.0)
&  219.86   &  225.90      & 222.22  &  
516.25    & 217.59  \\ 
Syn (0.5)
&  208.82   &  208.27      & 205.90 &  
495.62  & 201.70  \\ 
Syn (1.0)
& 269.44   &   268.33     & 270.65  & 
557.58   & 260.50   \\  
\midrule
FEMNIST  &  3522.83   &  3135.76      & 3147.62 & 
> 5000.00  &  3368.63  \\ 
\bottomrule
\end{tabular}}
\label{tab:fl_running_time}
\vspace{-0.15in}
\end{table}

\section{Conclusion}

In this paper, we have proposed ``many-objective multi-solution transport (\name)'', a framework that aims to find  $m$ Pareto solutions (models) that achieve diverse trade-offs among $n$ optimization objectives. We have specifically investigated a challenging case of $n\gg m$, in which existing methods often struggle with exploring a high-dimensional Pareto frontier. We formulate \name as a bi-level optimization of multiple weighted objectives, where the weights guide the exploration and are determined by an optimal transport (OT) matching objectives and solutions. Our algorithm theoretically converges to $m$ Pareto solutions by alternating between optimizing the weighted objectives and OT. \name can be extended to achieve diverse solutions for the $n\ll m$ cases. We have applied \name to a rich class of machine learning problems that involve training $m$ models to serve $n$ users, domains, or criteria.  
Empirically, we have observed that \name outperforms other strong baselines in tasks including federated learning, multi-task learning, mixture-of-prompt, fairness-accuracy trade-offs, and other simpler MOO benchmarks. 

\section*{Broader Impact}
We aim to develop a general algorithmic framework \name to serve many objectives (which can be instantiated by clients, tasks, etc.) with a few models. We provide efficient algorithms along with analysis to reason about the properties of \name. We have demonstrated that \name can be applied to a range of problems including federated learning, multi-task learning, mixture of prompt learning across their corresponding benchmarks. One key benefit of \name is its ability to learn diverse and balanced assignments (or matching) between objectives and models, which we believe could be useful for many other applications beyond those explored herein.

\bibliography{references}
\bibliographystyle{icml2024}

\newpage
\appendix
\onecolumn

\section{Background on MGDA in Multi-Objective Optimization}  \label{sec:background}
We first describe some background on the multi-gradient descent algorithm to solve multi-objective optimization.

Let $\bL(\theta) \in \mathbb{R}^{n}$ be defined as
\begin{align}
    \bL(\theta) := \left(L_1(\theta),\cdots, L_n(\theta)\right), \theta \in \mathbb{R}^d.
\end{align}
The goal for the multi-objective optimization (minimization) problem is to find Pareto optimal solutions with respect to all objectives $L_i(\cdot), i \in [n]$. One line of method is at each iteration, to find a common descent direction $d$ for all objectives. Given the current model $\theta$, we would like to find a descent step to minimize each objective value. For the single-objective case, the direction is $-\nabla L(\theta)$. For $n$ objectives, one objective is to solve for $d$:
\begin{align} \label{obj:1}
    \min_d \left\{\max_{i \in [n]} \nabla L_i(\theta)^\top d + \frac{1}{2} \left\|d\right\|^2 \right\},
\end{align}
and then apply $d$ as $\theta=\theta+\eta d$. If the optimal objective value of Eq.~\eqref{obj:1} is negative, then there exists a descent direction $d^*$ such that all objective values will be decreased. 
If $\theta$ is Pareto stationary, then $d=\mathbf{0}$ and the optimal objective value is 0. 
This formulation is equivalent to
\begin{align} \label{obj:2}
   & \min_{b, d}\quad b + \frac{1}{2} \left\|d\right\|^2 \\
   & s.t. \quad \nabla L_i(\theta)^\top d \leq b, i \in [n]. \nonumber
\end{align}
Formally, we have the following lemma.

\begin{lemma}[Good Descent Direction~\citet{mgda} ] \label{lemma:opt}
Let $d, b$ be the solutions of Eq.~\eqref{obj:2}, then 
\begin{enumerate}
    \item If $\theta$ is Pareto stationary, then $d = \mathbf{0}$ and $b=0$.
    \item If $\theta$ is not Pareto stationary, then 
    \begin{align}
        &b \leq -\frac{1}{2} \|d\|^2 < 0, \\
        &\nabla L_i(\theta)^\top d \leq b, ~i \in [n].
    \end{align}
\end{enumerate}
\end{lemma}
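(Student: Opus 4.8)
\textbf{Proof proposal for Lemma~\ref{lemma:opt} (Good Descent Direction).}

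The plan is to treat Eq.~\eqref{obj:2} as a convex quadratic program in the variables $(b,d)$ and read off the conclusions from its KKT conditions, matching the standard MGDA argument. First I would observe that the objective $b+\frac12\|d\|^2$ is convex (linear in $b$ plus a strictly convex quadratic in $d$) and the constraints $\nabla L_i(\theta)^\top d \le b$ are affine, so strong duality holds and a minimizer $(b^*,d^*)$ satisfies the KKT conditions: there exist multipliers $\lambda \in \Delta^n$ (the Lagrangian stationarity in $b$ forces $\sum_i \lambda_i = 1$ and $\lambda_i\ge 0$) with $d^* = -\sum_{i}\lambda_i \nabla L_i(\theta)$, together with primal feasibility $\nabla L_i(\theta)^\top d^* \le b^*$ and complementary slackness $\lambda_i(\nabla L_i(\theta)^\top d^* - b^*)=0$.

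For part 1, suppose $\theta$ is Pareto stationary, i.e.\ there is no common descent direction; equivalently $\mathbf 0$ lies in the convex hull of $\{\nabla L_i(\theta)\}$, so there is $\lambda\in\Delta^n$ with $\sum_i\lambda_i\nabla L_i(\theta)=\mathbf 0$. I would first note $(b,d)=(0,\mathbf 0)$ is feasible with objective value $0$, hence the optimal value is $\le 0$. Then I would show the optimal value is also $\ge 0$: for any feasible $(b,d)$, taking the $\lambda$-weighted combination of the constraints gives $b \ge \sum_i\lambda_i\nabla L_i(\theta)^\top d = 0$, so $b+\frac12\|d\|^2 \ge 0$. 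Thus the optimum is exactly $0$, and since $\frac12\|d\|^2 \le b^* - (\text{the rest}) $ — more precisely, at the optimum $b^* \ge 0$ and $b^* + \frac12\|d^*\|^2 = 0$ forces $b^*=0$ and $d^*=\mathbf 0$.

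For part 2, suppose $\theta$ is not Pareto stationary. Using the KKT relation $d^* = -\sum_i\lambda_i\nabla L_i(\theta)$ and complementary slackness, I would compute $\sum_i \lambda_i \nabla L_i(\theta)^\top d^* = -\|d^*\|^2$, while complementary slackness gives $\nabla L_i(\theta)^\top d^* = b^*$ on the support of $\lambda$, so $b^* = b^*\sum_i\lambda_i = \sum_i\lambda_i\nabla L_i(\theta)^\top d^* = -\|d^*\|^2$. Combining with primal feasibility $\nabla L_i(\theta)^\top d^* \le b^*$ for all $i$ gives the stated inequality $b^* \le -\|d^*\|^2 \le -\frac12\|d^*\|^2$; actually the standard bookkeeping uses the optimal value: since $(0,\mathbf 0)$ is feasible the optimal value $b^* + \frac12\|d^*\|^2 \le 0$, i.e.\ $b^* \le -\frac12\|d^*\|^2$, and the strict inequality $b^*<0$ follows because $d^*\ne\mathbf 0$ (otherwise $\mathbf 0$ would be a convex combination of the gradients, contradicting non-stationarity). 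Feasibility already records $\nabla L_i(\theta)^\top d^* \le b^*$ for all $i$.

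The main obstacle is being careful about the equivalence between Eq.~\eqref{obj:1} and Eq.~\eqref{obj:2} and about what ``Pareto stationary'' means operationally — i.e.\ pinning down that non-stationarity is exactly the statement that $\mathbf 0$ is not in the convex hull of the gradients, which is what lets me conclude $d^*\ne\mathbf 0$ and hence $b^*<0$. Everything else is routine convex-duality bookkeeping (verifying Slater / strong duality for this QP, and chasing the KKT identities); I would relegate those computations to a couple of lines and cite \citet{mgda} for the original argument.
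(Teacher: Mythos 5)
Your proposal is correct, and it is essentially the standard argument from \citet{mgda}: the paper itself states this lemma as cited background and gives no proof, so there is nothing to diverge from. Your KKT bookkeeping checks out — stationarity in $b$ forces $\lambda\in\Delta^n$, stationarity in $d$ gives $d^*=-\sum_i\lambda_i\nabla L_i(\theta)$, and complementary slackness even yields the sharper identity $b^*=-\|d^*\|^2$, which implies the stated $b^*\le-\frac{1}{2}\|d^*\|^2$; the only points worth a line each in a written version are attainment of the minimum (eliminate $b$ to reduce to the coercive problem in Eq.~\eqref{obj:1}) and the operational definition of Pareto stationarity as $\mathbf{0}\in\mathrm{conv}\{\nabla L_i(\theta)\}$, both of which you correctly flag.
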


\begin{lemma}[A Rescaled Version of Lemma~\ref{lemma:opt}] \label{lemma:opt-2}
Let $d_j \in \mathbb{R}^d$ be the solution of Eq.~\eqref{equ:optd} and $\Gamma_{i,j}$ be some non-negative scalar, then 
\begin{enumerate}
    \item If $\theta_j$ is Pareto stationary, then $d_j = \mathbf{0}$. 
    \item If $\theta_j$ is not Pareto stationary, then 
    \begin{align}
       \Gamma_{i,j} L_i(\theta_j)^\top d_j \leq -\frac{1}{2} \|d_j\|^2, ~i \in [n].
    \end{align}
\end{enumerate}
\end{lemma}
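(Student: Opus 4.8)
\textbf{Proof proposal for Lemma~\ref{lemma:opt-2}.}

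The plan is to reduce Lemma~\ref{lemma:opt-2} to Lemma~\ref{lemma:opt} by recognizing that Eq.~\eqref{equ:optd} is exactly the min-max problem \eqref{obj:1} with each gradient $\nabla L_i(\theta)$ replaced by the rescaled gradient $\Gamma_{i,j}\nabla L_i(\theta_j)$. Concretely, define $g_i \triangleq \Gamma_{i,j}\nabla L_i(\theta_j)$ for $i\in[n]$, so that Eq.~\eqref{equ:optd} reads $\min_{d_j}\{\max_{i\in[n]} g_i^\top d_j + \tfrac12\|d_j\|^2\}$, which has the same structure as Eq.~\eqref{obj:1}/\eqref{obj:2} with the collection $\{g_i\}$ in place of $\{\nabla L_i(\theta)\}$. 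The epigraph reformulation introduces the slack variable $b$ with constraints $g_i^\top d_j\le b$, and Lemma~\ref{lemma:opt} applied to this rescaled instance immediately yields: if the point is Pareto stationary for the rescaled objectives then $d_j=\mathbf 0$ and $b=0$; otherwise $b\le -\tfrac12\|d_j\|^2<0$ and $g_i^\top d_j\le b$ for all $i$. Chaining these two inequalities gives $g_i^\top d_j\le b\le -\tfrac12\|d_j\|^2$, i.e. $\Gamma_{i,j}\nabla L_i(\theta_j)^\top d_j\le -\tfrac12\|d_j\|^2$ for every $i\in[n]$, which is the claimed bound.

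The one subtlety I would be careful about is matching up the two notions of ``Pareto stationary'': Lemma~\ref{lemma:opt} as stated refers to Pareto stationarity with respect to the gradients fed into the program, so strictly it gives stationarity of $\theta_j$ with respect to the \emph{rescaled} objectives $\{\Gamma_{i,j}L_i\}$. When all $\Gamma_{i,j}>0$ this is equivalent to Pareto stationarity with respect to $\{L_i\}$, since positive rescaling of the gradients does not change whether $\mathbf 0$ lies in the convex hull of the (scaled) gradients — the convex hull of $\{\Gamma_{i,j}\nabla L_i(\theta_j)\}$ contains the origin iff the convex hull of $\{\nabla L_i(\theta_j)\}$ does, as one can pass between the two via the reweighting $\lambda_i \mapsto \lambda_i/\Gamma_{i,j}$ up to normalization. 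I would state this equivalence explicitly (and note that if some $\Gamma_{i,j}=0$ the corresponding objective simply drops out of the program, consistent with the ``subset of matching objectives'' language used in the convergence discussion). This is really the only place where anything needs to be said beyond citing Lemma~\ref{lemma:opt}.

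The main (and essentially only) obstacle is therefore bookkeeping rather than mathematical depth: making sure the application of Lemma~\ref{lemma:opt} to the rescaled gradients is legitimate (it is, since $\Gamma_{i,j}\ge 0$ and the quadratic-plus-max structure is preserved), and being precise about the stationarity correspondence under the nonnegative scaling. I would also remark that this rescaled direction is exactly $d_j=\sum_i\lambda_i^*\Gamma_{i,j}\nabla L_i(\theta_j)$ from Eq.~\eqref{equ:dj} via strong duality between Eq.~\eqref{equ:optd} and the min-norm dual Eq.~\eqref{equ:mgda}, tying the lemma back to the quantity $d_j^t$ that appears in the convergence theorem, where the per-objective descent inequality $\Gamma_{i,j}^t\nabla L_i(\theta_j^t)^\top d_j^t\le -\tfrac12\|d_j^t\|^2$ is used.
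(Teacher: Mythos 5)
Your reduction of Lemma~\ref{lemma:opt-2} to Lemma~\ref{lemma:opt} via the substitution $g_i=\Gamma_{i,j}\nabla L_i(\theta_j)$ is correct and is exactly the route the paper intends (the paper states the lemma as an immediate rescaling of Lemma~\ref{lemma:opt} without writing out a proof). Your additional care about matching the two notions of Pareto stationarity under nonnegative rescaling, and about objectives with $\Gamma_{i,j}=0$ dropping out of the program, actually makes the argument more complete than what the paper provides; no gaps.
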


\newpage

\section{Convergence Proofs} \label{appen:convergence}

\subsection{Strongly-Convex Cases}

\name learns a matching between objectives and solutions represented by its non-zero entries. Therefore, we show that our proposed algorithm (using full gradients) can converge to Pareto stationary points with respect to a subset of matching objectives in strongly-convex cases. 
We first make a common assumption.
\begin{assumption}\label{assu:subobj}
    Each objective $L_i(\theta)$ ($i \in [n]$) is $\nu$-smooth and $\mu$-strongly convex w.r.t. $\theta \in \mathbb{R}^d$.
\end{assumption}
We introduce another assumption on objective-model matching, as follows.
\begin{assumption}\label{assu:non-zero}
    There exists an $s~(s < \infty)$ such that after  $s$ iterations, all the non-zero entries in $\Gamma^s$ remain non-zero, which are lower bounded by a small constant $\epsilon$. 
\end{assumption}
This assumption can be interpreted as that the assignment of solutions for each objective become stable during training. From Figure~\ref{fig:average_vs_oracle}, we empirically observe that after certain iterations, the learnt $\Gamma$ will have the same non-zero patterns.
\begin{theorem}[Strongly-Convex] \label{thm:convergence_convex}
    Let Assumption~\ref{assu:subobj} and~\ref{assu:non-zero} hold. Given marginal distribution constraints $\alpha \in \Delta^n$ and $\beta \in \Delta^m$, under a fixed learning rate $\eta \leq \frac{1}{\nu}$, after running Algorithm~\ref{alg:MOST} for $T+s$ outer iterations with full multi-gradient descent, we have for each solution $j \in [m]$, 
    \begin{align}
        \left\|\theta_j^{T+s}-\theta_j^*\right\|^2 \leq \left(1-\mu\eta\epsilon \right)^{TK} \left\|\theta_j^s-\theta_j^*\right\|^2,
    \end{align}
    where $\theta_j^*$ is a Pareto stationary solution across objectives with a non-zero $\Gamma^s_{\cdot,j}$.
\end{theorem}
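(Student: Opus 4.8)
The plan is to reduce the convergence of each $\theta_j$ to a standard strongly-convex gradient-descent argument applied to the \emph{effective} objective that the MGDA direction minimizes, restricted to the stabilized index set. First I would fix $j \in [m]$ and work in the regime $t \ge s$, so that by Assumption~\ref{assu:non-zero} the support $S_j \triangleq \{i : \Gamma^t_{i,j} > 0\}$ is fixed across all $t \ge s$ and every nonzero weight satisfies $\Gamma^t_{i,j} \ge \epsilon$. On this support, the reweighted problem in Eq.~\eqref{equ:optd} is an MGDA problem for the family $\{\Gamma_{i,j} L_i(\theta_j)\}_{i \in S_j}$; since each $L_i$ is $\mu$-strongly convex and $\nu$-smooth, each scaled loss $\Gamma_{i,j} L_i$ is $\Gamma_{i,j}\mu$-strongly convex and $\Gamma_{i,j}\nu$-smooth, hence at least $\epsilon\mu$-strongly convex where it is active. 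The Pareto-stationary target $\theta_j^*$ is a point where the min-norm combination of $\{\Gamma_{i,j}\nabla L_i(\theta_j^*)\}_{i \in S_j}$ vanishes; by Lemma~\ref{lemma:opt-2} this is exactly where $d_j = \mathbf 0$.

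The key steps, in order, are: (1) Write one inner update as $\theta_j^+ = \theta_j + \eta d_j$ with $d_j = \sum_{i \in S_j} \lambda_i^* \Gamma_{i,j}\nabla L_i(\theta_j)$, $\lambda^* \in \Delta^n$ supported on $S_j$. Expand $\|\theta_j^+ - \theta_j^*\|^2 = \|\theta_j - \theta_j^*\|^2 + 2\eta\, d_j^\top(\theta_j - \theta_j^*) + \eta^2\|d_j\|^2$. (2) Bound the cross term: $d_j^\top(\theta_j - \theta_j^*) = -\sum_i \lambda_i^* \Gamma_{i,j}\nabla L_i(\theta_j)^\top(\theta_j^* - \theta_j)$, and apply strong convexity of each $L_i$ to get $\nabla L_i(\theta_j)^\top(\theta_j^* - \theta_j) \le L_i(\theta_j^*) - L_i(\theta_j) - \tfrac{\mu}{2}\|\theta_j - \theta_j^*\|^2$. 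Since $\theta_j^*$ is Pareto stationary for the matched subfamily, I expect to argue $L_i(\theta_j^*) \le L_i(\theta_j)$ for the active $i$ (this is the crucial use of Pareto stationarity together with the stabilized support and the fact that $d_j$ is a common descent direction — the monotone decrease of each active $L_i$ along the trajectory means the limit point cannot be dominated on $S_j$), yielding $d_j^\top(\theta_j - \theta_j^*) \le -\tfrac{\mu}{2}(\sum_i \lambda_i^*\Gamma_{i,j})\|\theta_j - \theta_j^*\|^2 \le -\tfrac{\mu\epsilon}{2}\|\theta_j - \theta_j^*\|^2$ after checking $\sum_i \lambda_i^*\Gamma_{i,j} \ge \epsilon$ (since $\lambda^* \in \Delta$ and each active $\Gamma_{i,j}\ge\epsilon$). (3) Absorb the $\eta^2\|d_j\|^2$ term using smoothness: from Lemma~\ref{lemma:opt-2}, $\Gamma_{i,j}\nabla L_i(\theta_j)^\top d_j \le -\tfrac12\|d_j\|^2$, and a $\nu$-smoothness/descent-lemma bound shows $\eta \le 1/\nu$ makes the quadratic term harmless (it is dominated by the contraction from the cross term). (4) Combine to get $\|\theta_j^+ - \theta_j^*\|^2 \le (1 - \mu\eta\epsilon)\|\theta_j - \theta_j^*\|^2$ for one inner step, then iterate over the $K$ inner steps per outer round and the $T$ outer rounds after step $s$, giving the stated $(1-\mu\eta\epsilon)^{TK}$ factor.

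The main obstacle I anticipate is step (2): rigorously justifying that $L_i(\theta_j^*) - L_i(\theta_j) \le 0$ for all active $i$ along the trajectory, i.e., that the algorithm's iterates stay in a region where the target is not dominated on the matched subset. One clean way is to define $\theta_j^*$ not abstractly but as the limit of the iterates (whose existence follows from monotonicity of each active $L_i$ under the common-descent property of Lemma~\ref{lemma:opt-2} plus boundedness from strong convexity), and then Pareto stationarity of the limit is automatic; the inequality $L_i(\theta_j^*) \le L_i(\theta_j^t)$ then holds because each active $L_i(\theta_j^t)$ is nonincreasing in $t$. A secondary technical point is handling the interaction between the changing $\lambda^*$ (which is re-solved every inner step) and the fixed support $S_j$ — but since $\lambda^*$ only ever enters through the convex-combination coefficients $\lambda_i^*\Gamma_{i,j}$ which stay $\ge 0$ and sum to something $\ge \epsilon$, the per-step contraction constant is uniform and the telescoping is clean. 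I would also double-check the precise smoothness constant needed so that $\eta \le 1/\nu$ (rather than $1/(2\nu)$ as in Theorem~\ref{thm:convergence_non_convex}) suffices, since the strongly-convex bound tolerates a slightly larger step.
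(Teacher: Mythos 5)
Your proposal follows essentially the same route as the paper's proof: fix the support after iteration $s$ via Assumption~\ref{assu:non-zero}, define $\theta_j^*$ as the limit of the iterates so that monotone non-increase of every active objective (from Lemma~\ref{lemma:opt-2}) gives $L_i(\theta_j^*)\le L_i(\theta_j^t)$, use strong convexity together with $\sum_i\lambda_i^*\Gamma_{i,j}\ge\epsilon$ to extract the $\mu\eta\epsilon$ contraction, use smoothness with $\eta\le\frac{1}{\nu}$ to control the second-order term, and compound over the $TK$ inner steps. You also correctly isolate the one genuinely delicate point (why $L_i(\theta_j^*)\le L_i(\theta_j^t)$ on the matched subset) and resolve it the same way the paper does.

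Two small points to tighten. First, your steps (2) and (3) cannot both be taken literally: if in step (2) you discard the terms $L_i(\theta_j^*)-L_i(\theta_j^t)\le 0$ to obtain $d_j^\top(\theta_j^t-\theta_j^*)\le-\frac{\mu\epsilon}{2}\|\theta_j^t-\theta_j^*\|^2$, the expansion still leaves a bare $+\eta^2\|d_j^t\|^2$ with nothing to cancel it --- the strong-convexity contraction does not dominate it, since $\|d_j^t\|$ need not be controlled by $\|\theta_j^t-\theta_j^*\|$ with the right constant. The fix (which is what the paper's algebra implements) is to \emph{keep} those function-value gaps and bound them by the descent lemma, $L_i(\theta_j^*)-L_i(\theta_j^t)\le L_i(\theta_j^{t+1})-L_i(\theta_j^t)\le\eta\nabla L_i(\theta_j^t)^\top d_j^t+\frac{\nu\eta^2}{2}\|d_j^t\|^2$; weighting by $\lambda_i^*\Gamma_{i,j}^t$ and summing produces $-\eta\|d_j^t\|^2+\frac{\nu\eta^2}{2}(\sum_i\lambda_i^*\Gamma_{i,j}^t)\|d_j^t\|^2$, and it is this $-2\eta^2\|d_j^t\|^2+\nu\eta^3\|d_j^t\|^2$ contribution that absorbs the $+\eta^2\|d_j^t\|^2$ from the expansion when $\eta\le\frac{1}{\nu}$. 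Second, be consistent about the sign of $d_j$: the contraction requires $d_j$ to be the \emph{negative} convex combination $-\sum_i\lambda_i^*\Gamma_{i,j}\nabla L_i(\theta_j)$ (as used in Lemma~\ref{lemma:opt-2} and in the update $\theta_j\leftarrow\theta_j+\eta d_j$); with the positive sign written in your step (1), the inequality $d_j^\top(\theta_j-\theta_j^*)\le-\frac{\mu\epsilon}{2}\|\theta_j-\theta_j^*\|^2$ reverses and the argument breaks. (The ambiguity is inherited from Eq.~\eqref{equ:dj}, but the proof must commit to the descent sign.) With these repairs your argument is the paper's proof in a slightly reorganized form.
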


\begin{proof}
First, let us assume $K=1$. At each iteration $t$, we have
\begin{align}
    d_j^t = -\sum_{i\in [n]} \lambda_i \Gamma_{i,j}^t \nabla L_i(\theta_j^t),~~\forall j\in[m]
\end{align}
for some $\{\lambda_i\}_{i \in [n]} \in \Delta^n$ which is the solution of Eq.~\eqref{equ:mgda}. 
First we note that for every $j \in [m]$, $\theta_j$ will converge. If $d_j^t=\mathbf{0}$, then $\theta_j$ has converged to a Pareto stationary point. Otherwise, for every objective $L_i$, we have monotonically decrease for the sequence: $L_i(\theta_j^{t+1})-L_i(\theta_j^t) \leq -\frac{1}{2}\eta (1-\nu \eta) \|d_j^t\|^2 < 0$. Therefore, every solution will converge.
We denote $\theta_j^*$ as one of the Pareto stationary solutions of Eq.~\eqref{obj:pareto} that solution $\theta_j$ converges to. 
By definition and the properties of MDGA, we know that for every solution, every objective value will be non-increasing throughout optimization. Hence, for every $i \in [n]$ and $j\in[m]$,  it holds that
\begin{align}
    L_i(\theta_j^{t+1}) - L_i(\theta_j^*) \geq 0.
\end{align}
For every $i \in [n]$, the $\nu$-smoothness and $\mu$-convexity of $L_i$ lead to
\begin{align}
   L_i(\theta_j^{t+1}) & =L_i(\theta_j^t+\eta d_j^t) \\ &\leq L_i(\theta_j^t) + \eta \nabla L_i(\theta_j^t)^\top d_j^t + \frac{\nu}{2} \|\eta d_j^t\|^2 \\
   &\leq L_i(\theta_j^*) + \nabla L_i(\theta_j^t)^\top (\theta_j^t-\theta_j^*) - \frac{\mu}{2} \left\|\theta_j^t-\theta_j^*\right\|^2 + \eta \nabla L_i(\theta_j^t)^\top d_j^t + \frac{\nu}{2} \|\eta d_j^t\|^2.
\end{align}
By moving $L_i(\theta_j^*)$ to the left-hand side and multiplying both sides by $\lambda_i \Gamma_{i,j}^t$, we have
\begin{align}\label{equ:recur}
   &\sum_{i\in [n]} \lambda_i\Gamma_{i,j}^t \left(L_i(\theta_j^{t+1}) - L_i(\theta_j^*)\right) \\
   &\leq \sum_{i\in [n]} \lambda_i\Gamma_{i,j}^t \nabla L_i(\theta_j^t)^\top (\theta_j^t-\theta_j^* + \eta d_j^t) - \sum_{i\in [n]} \lambda_i\Gamma_{i,j}^t \frac{\mu}{2} \left\|\theta_j^t-\theta_j^*\right\|^2 + \sum_{i\in [n]} \lambda_i\Gamma_{i,j}^t \frac{\nu}{2} \|\eta d_j^t\|^2.
\end{align}
As
    $\Gamma_{i,j}^t \geq \epsilon >0$,
then
\begin{align}
    \sum_{i \in [n]} \lambda_i \Gamma_{i,j}^t \frac{\mu}{2} \|\theta_j^t-\theta_j^*\|^2 \leq \frac{\mu \epsilon}{2} \|\theta_j^t-\theta_j^*\|^2.
\end{align}
Due to the H\"older inequality, we have $\sum_{i\in[n]}\lambda_i\Gamma_{i,j}\leq \|\lambda\|_1\|\Gamma_{\cdot,j}\|_\infty := \beta_j \leq 1$. Hence, we have
\begin{align}
&\sum_{i\in [n]} \lambda_i\Gamma_{i,j}^t \left(L_i(\theta_j^{t+1}) - L_i(\theta_j^*)\right) \nonumber
\\ &\leq \sum_{i\in [n]} \lambda_i\Gamma_{i,j}^t  \nabla L_i(\theta_j^t)^\top (\theta_j^t-\theta_j^*+\eta d_j^t)  - \frac{\mu\epsilon}{2} \left\|\theta_j^t-\theta_j^*\right\|^2 + \frac{\nu\beta_j}{2} \|\eta d_j^t\|^2 \\
   \notag &= -d_j^t (\theta_j^t-\theta_j^*) - \eta \|d_j^t\|^2 - \frac{\mu\epsilon}{2} \left\|\theta_j^t-\theta_j^*\right\|^2 + \frac{\nu\beta_j}{2} \|\eta d_j^t\|^2 \\
   &\leq -d^t (\theta_j^t-\theta_j^*) - \eta\left(1-\frac{\beta_j}{2}\right) \|d_j^t\|^2 - \frac{\mu\epsilon}{2} \left\|\theta_j^t-\theta_j^*\right\|^2 ~(\text{taking}~ \eta \leq \frac{1}{\nu})\\
   \notag &\leq -d^t (\theta_j^t-\theta_j^*) - \frac{\eta}{2} \|d_j^t\|^2 - \frac{\mu\epsilon}{2} \left\|\theta_j^t-\theta_j^*\right\|^2 ~(\text{using}~ \beta_j \leq 1)\\
   \notag &= -\frac{1}{2\eta} (2\eta d_j^t (\theta_j^t-\theta_j^*) + \|\eta d_j^t\|^2) - \frac{\mu\epsilon}{2} \left\|\theta_j^t-\theta_j^*\right\|^2 \\
   \notag &= -\frac{1}{2\eta} (2 (\theta_j^{t+1} - \theta_j^t)^\top (\theta_j^t-\theta_j^*) + \|\theta_j^{t+1} - \theta_j^t\|^2) - \frac{\mu\epsilon}{2} \left\|\theta_j^t-\theta_j^*\right\|^2 \\
   \notag &= -\frac{1}{2\eta} (2\theta_j^{t+1}\theta_j^t - 2 \|\theta_j^t\|^2  -2 (\theta_j^{t+1} - \theta_j^t)^\top \theta_j^* + \|\theta_j^{t+1}\|^2 + \|\theta_j^t\|^2 - 2 \theta_j^{t+1}\theta_j^t) - \frac{\mu\epsilon}{2} \left\|\theta_j^t-\theta_j^*\right\|^2 \\
   \notag &= -\frac{1}{2\eta} (\|\theta_j^{t+1}\|^2 -2 (\theta_j^{t+1} - \theta_j^t)^\top \theta_j^* -\|\theta_j^t\|^2) - \frac{\mu\epsilon}{2} \left\|\theta_j^t-\theta_j^*\right\|^2\\
   \notag &= -\frac{1}{2\eta} (\|\theta_j^{t+1}-\theta_j^*\|^2-\|\theta_j^t-\theta_j^*\|^2) - \frac{\mu\epsilon}{2} \left\|\theta_j^t-\theta_j^*\right\|^2\\
   &= \frac{1}{2\eta} (\|\theta_j^t-\theta_j^*\|^2-\|\theta_j^{t+1}-\theta_j^*\|^2) - \frac{\mu\epsilon}{2} \left\|\theta_j^t-\theta_j^*\right\|^2.
\end{align}
Since during optimization, we guarantee that every objective value will be non-increasing at each iteration, we have $L_i(\theta_j^{t+1}) - L_i(\theta_j^*) \geq 0$. So the left-hand side of Eq.~\eqref{equ:recur} is non-negative. Hence,
\begin{align}
\frac{1}{2\eta} (\|\theta_j^t-\theta_j^*\|^2-\|\theta_j^{t+1}-\theta_j^*\|^2) &- \frac{\mu\epsilon}{2} \left\|\theta_j^t-\theta_j^*\right\|^2 \geq 0, \\
    \|\theta_j^{t+1}-\theta_j^*\|^2 &\leq (1-\mu\eta\epsilon) \|\theta_j^t-\theta_j^*\|^2,
\end{align}
which gives us linear convergence.

When $K>1$, at each outer iteration, fixing $\Gamma_{i,j}^t$, we are running multiple updates on the model parameters. In this case, we still have
\begin{align}
    \|\theta_j^{t, k+1} - \theta_j^*\| \leq (1-\mu\eta\epsilon) \|\theta_{j}^{t,k}-\theta_j^*\|^2,
\end{align}
where $\|\theta_j^{t, k+1}\|$ denote the model parameters at the $t$-th outer iteration and $k+1$-th inner iteration (Line 6 of Algorithm~\ref{alg:MOST}). Hence,
\begin{align}
    \|\theta_j^{t+1} - \theta_j^*\| \leq (1-\mu\eta\epsilon)^K \|\theta_{j}^{t}-\theta_j^*\|^2
\end{align}
holds.
\end{proof}

\subsection{Non-Convex and Smooth Cases} \label{app:convergence:non_convex}

For simplicity, we first consider the case where $K=1$. From Lemma~\ref{lemma:opt-2}, we know that at each iteration $t$,
\begin{align}
    \Gamma_{i,j}^t \nabla L_i(\theta_j^t)^\top d_j^t \leq -\frac{1}{2} \left\|d_j^t\right\|^2, ~~\forall i \in [n].
\end{align}
Assuming $\nu$-smooth of each $L_i$, we have
\begin{align}
    \Gamma_{i,j}^t \left(L_i(\theta_j^{t+1})-L_i(\theta_j^t)\right) &= \Gamma_{i,j}^t \left(L_i(\theta_j^t+\eta d_j^t) - L_i(\theta_j^t)\right) \\ &\leq \eta \Gamma_{i,j}^t  \nabla L_i(\theta_j^t)^\top d_j^t + \frac{\nu \Gamma_{i,j}^t }{2} \|\eta d_j^t\|^2  \\
    & \leq -\frac{\eta}{2} \|d_j^t\|^2 + \frac{\nu \eta^2}{2} \Gamma_{i,j}^t \|d_j^t\|^2 \\
    &\leq -\frac{\eta}{2} \Gamma_{i,j}^t \|d_j^t\|^2 + \frac{\nu \eta^2}{2} \Gamma_{i,j}^t \|d_j^t\|^2 ~~ (\Gamma_{i,j}^t \leq 1)\\
    &= - \frac{\eta (1-\nu \eta)}{2} \Gamma_{i, j}^t \|d_j^t\|^2.
\end{align}
Sum over all models $j\in [m]$, 
\begin{equation}\label{equ:improve-i}
    \sum_{j\in [m]}\Gamma_{i,j}^t \left(L_i(\theta_j^{t+1})-L_i(\theta_j^t)\right)\leq -\frac{\eta (1-\nu \eta)}{2}\sum_{j\in [m]}\Gamma_{i,j}^t \|d_j^t\|^2.
\end{equation}
The above result is for a single objective $L_i(\cdot)$. Now let's consider the weighted sum of all the $n$ objectives between two steps with different $\Gamma$'s, i.e., $\Gamma^{t}$ and $\Gamma^{t+1}$. 
By the optimality of $\Gamma^{t+1}$, 
\begin{align*}
    \sum_{i\in[n]}\sum_{j\in [m]} \Gamma^{t+1}_{i,j} L_j(\theta_j^{t+1}) - \tau \sum_{i\in [n]} \max_{j \in [m]} \Gamma_{i,j}^{t+1} &\leq \sum_{i\in[n]}\sum_{j\in [m]} \Gamma^{t}_{i,j} L_j(\theta_j^{t+1}) - \tau \sum_{i\in [n]} \max_{j \in [m]} \Gamma_{i,j}^{t}.
\end{align*}
We then have
\begin{align}
    &\sum_{i\in[n]}\sum_{j\in [m]}\left(\Gamma^{t+1}_{i,j} L_i(\theta_j^{t+1})-\Gamma^{t}_{i,j}L_i(\theta_j^t)\right) \\
    & \leq \sum_{i\in[n]}\sum_{j\in [m]}\Gamma^{t}_{i,j} \left(L_i(\theta_j^{t+1})-L_i(\theta_j^t)\right) + \tau \sum_{i \in [n]} \max_{j\in [m]} \Gamma_{i,j}^{t+1} - \tau \sum_{i\in[n]} \max_{j \in [m]} \Gamma_{i,j}^t \\
    & \leq -\frac{\eta (1-\nu \eta)}{2}\sum_{j\in [m]}\sum_{i\in[n]}\Gamma_{i,j}^t \|d_j^t\|^2+ \tau \sum_{i \in [n]} \max_{j\in [m]} \Gamma_{i,j}^{t+1} - \tau \sum_{i\in[n]} \max_{j \in [m]} \Gamma_{i,j}^t~~(\text{apply Eq.~}\eqref{equ:improve-i})\\
    & = -\frac{\eta (1-\nu \eta)}{2}\sum_{j\in [m]}\beta_j \|d_j^t\|^2 + \tau \sum_{i \in [n]} \max_{j\in [m]} \Gamma_{i,j}^{t+1} - \tau \sum_{i\in[n]} \max_{j \in [m]} \Gamma_{i,j}^t.
\end{align}
Here $\beta_j$ is the $j$-th dimension of $\beta \in \Delta^m$ in Eq.~\eqref{obj:ot}. Hence,
\begin{align}
    \sum_{j \in [m]} \beta_j \|d_j^t\|^2 \leq &\frac{2}{\eta (1-\nu\eta)}\sum_{i \in [n]} \sum_{j \in [m]}  \left(\Gamma^{t}_{i,j} L_i(\theta_j^{t})-\Gamma^{t+1}_{i,j}L_i(\theta_j^{t+1})\right) \\ &+ \frac{2}{\eta (1-\nu\eta)} \left(\tau \sum_{i \in [n]} \max_{j\in [m]} \Gamma_{i,j}^{t+1} - \tau \sum_{i\in[n]} \max_{j \in [m]} \Gamma_{i,j}^t\right).
\end{align}
Applying telescope sum for $t \in [T]$ on both sides gives
\begin{align}
    \sum_{t \in [T]} \sum_{j \in [m]} \beta_j \|d_j^t\|^2 &\leq  \frac{2}{\eta (1-\nu\eta)} \sum_{i \in [n]} \sum_{j \in [m]}  \left(\Gamma^{1}_{i,j} L_i(\theta_j^{1})-\Gamma^{T+1}_{i,j}L_i(\theta_j^{T+1})\right) \\ &+ \frac{2}{\eta (1-\nu\eta)} \left(\tau \sum_{i \in [n]} \max_{j\in [m]} \Gamma_{i,j}^{T+1} - \tau \sum_{i\in[n]} \max_{j \in [m]} \Gamma_{i,j}^1 \right):= C.
\end{align}
Then we get the following bound on the average gradient norm:
\begin{align}
    \frac{1}{T} \sum_{t \in [T]} \sum_{j \in [m]} \beta_j \|d_j^t\|^2  \leq \frac{C}{T}.
\end{align}
If we take $\nu \eta=\frac{1}{2}$, then $C=O(\nu)$. This gives us a $O\left(\frac{\nu}{T}\right)$ rate in terms of gradient norms for non-convex cases under a fixed learning rate. 

For the case where $K>1$, we have
\begin{align}
    & \sum_{i\in[n]}\sum_{j\in [m]}\left(\Gamma^{t+1}_{i,j} L_i(\theta_j^{t+1})-\Gamma^{t}_{i,j}L_i(\theta_j^t)\right) \\ &\leq -\frac{\eta (1-\nu \eta)}{2} \sum_{j \in [m]} \beta_j \sum_{k \in [K]} \left\|d_j^{t, k}\right\|^2 \\ &\leq -\frac{\eta (1-\nu \eta)}{2} \sum_{j \in [m]} \beta_j  \left\|d_j^{t}\right\|^2,
\end{align}
where $k$ denotes the index for inner updates on model parameters fixing $\Gamma^t$. Similarly, we have the result
\begin{align}
    \frac{1}{T} \sum_{t \in [T]} \sum_{j \in [m]} \beta_j \left\|d_j^t\right\|^2 \leq \frac{C}{T}.
\end{align}

\section{Experiments with $n\ll m$}
\label{app:less_obj_exp}

\textbf{Evaluation Metrics.}
For applications with few objectives (small $n$), we use the hypervolume measure, a widely used metric for evaluating the quality of MOO solutions and is a proxy of diversity~\citep{zitzler1999multiobjective}. Hypervolume is feasible to compute when the number of objectives is small.
Given a solution set $S \subset \mathbb{R}^n$ and a set of reference points $r = [r_1, \dots, r_n] \subset \mathbb{R}^n$, the Hypervolume of $S$ measures the region weakly dominated by $S$ and bounded above by $r$:
$\textstyle
H(S)=\Lambda\left(\left\{q \in \mathbb{R}^n \mid \exists p \in S: p \leq q \text { and } q \leq r\right\}\right),
$
where $\Lambda(\cdot)$ denotes the Lebesgue measure. 
To ensure fair calculation of it, the reference points are kept consistent across all algorithms for each dataset. 
These reference points are determined either by following the settings of previous studies or by setting them as the upper bounds of the objective values from all algorithms to be compared.

\textbf{Hyperparameters.} 
For the extended version of \name (denoted as \nameE) described in Section~\ref{sec:few-obj-extention}, we introduce additional hyperparameters $\alpha_1, \dots, \alpha_n$, and $n'$ to handle the extension of existing objectives.
The parameter $\alpha_i$ represents the positive shape parameter of the Dirichlet distribution, used to generate diverse objective weights, and $n'$ represents the number of extended objectives.

\subsection{Toy problems}
We demonstrate the effectiveness of \name on a toy ZDT problem set. It is a popular MOO benchmark containing two objectives ($n=2$) with oracle Pareto fronts~\citep{zitzler2000comparison}. Specifically, we use ZDT-1, ZDT-2, and ZDT3, which are problems with 30 variables and exhibit convex, concave, and disconnected Pareto-optimal fronts, respectively.
We compare \name with two baselines described in Section~\ref{sec:exper-setup}, and two additional methods---Exact Pareto Optimization (EPO) based on different preference vectors~\citep{mahapatra2020multi}, and SVGD based on stein variational gradient descent~\citep{liu2021profiling}.

\begin{figure*}[htbp]
    \begin{center}
    \includegraphics[width=1.0\textwidth]{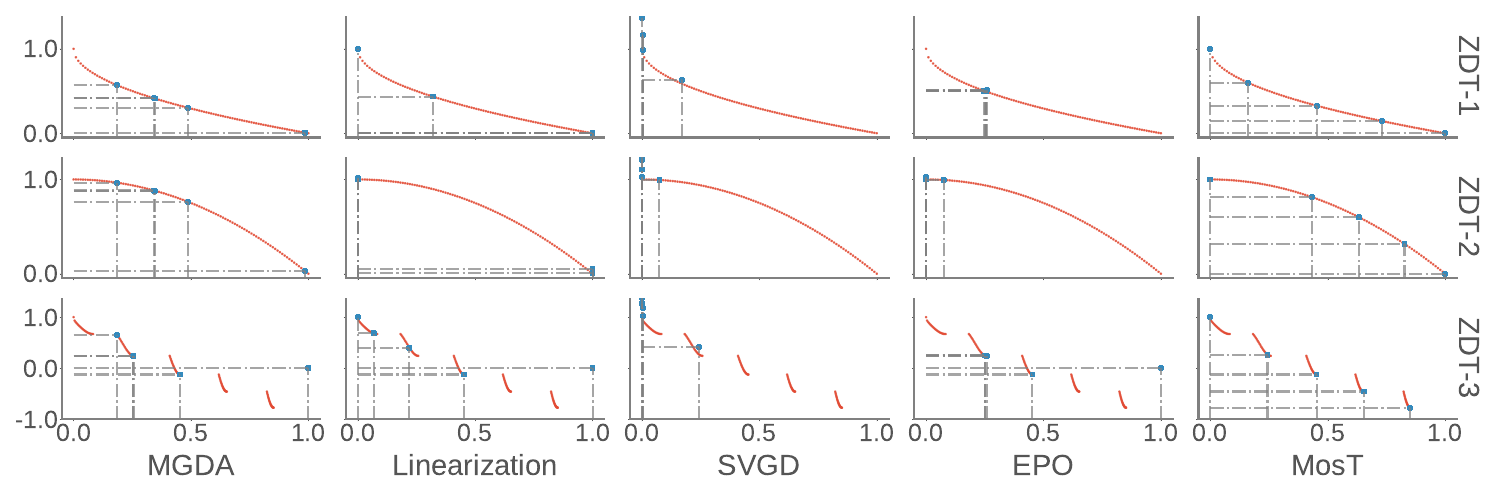}
  \end{center}
  \caption{
  Solutions derived by different methods (blue scatters) on the ZDT bi-objective task, with the oracle Pareto-optimal fronts for the two objectives shown in red scatters.
  }
  \label{fig:zdt_pfront}
\end{figure*}

\begin{table}[h!]
\centering
\caption{\name achieves higher Hypervolumes than the baselines on the ZDT bi-objective problem.
}
\vspace{0.3em}
\begin{tabular}{lccccc}
\toprule
 & MGDA & Linearization & SVGD & EPO  & \name               \\ \hline
\specialrule{0em}{1pt}{1pt} \hline 
ZDT-1              & 4.02$_{\pm 0.92}$ & 5.72$_{\pm 0.01}$          & 5.54$_{\pm 0.12}$ & 4.40$_{\pm 0.01}$ & \textbf{5.87}$_{\pm 0.00}$        \\ \hline
ZDT-2              & 4.63$_{\pm 0.94}$ & 6.65$_{\pm 0.00}$          & 6.65$_{\pm 0.00}$ & 6.65$_{\pm 0.00}$ & \textbf{6.88}$_{\pm 0.00}$      \\ \hline
ZDT-3              & 4.53$_{\pm 0.83}$ & 6.27$_{\pm 0.02}$     & 5.77$_{\pm 0.15}$ & 4.53$_{\pm 0.68}$ & \textbf{6.39}$_{\pm 0.03}$       \\ \hline
\end{tabular}
\label{table:zdt_hv}
\end{table}

We report the Hypervolumes of each method in Table~\ref{table:zdt_hv} and visualize the obtained solutions alongside the entire Pareto-optimal fronts in Figure~\ref{fig:zdt_pfront} for a more intuitive comparison.
The results in Table~\ref{table:zdt_hv} demonstrate that \name achieves higher Hypervolumes, indicating its superior ability to generate more diverse solution sets that cover larger areas. 
Further analysis of the Pareto fronts reveals the following observations:
1) EPO and SVGD prioritize reducing one loss, potentially resulting in biased trade-offs, with SVGD lacking guaranteed convergence to Pareto-optimal solutions;
2) MGDA produces diverse solutions but fails to cover the entire Pareto-optimal fronts;
3) Linearization-based MOO is a competitive baseline with high Hypervolumes, but its solutions do not provide satisfactory diverse trade-offs, as evident from the Pareto fronts;
4) In contrast, \name generates evenly-distributed solutions across the Pareto fronts.

\paragraph{Investigation for EPO.}
Despite adhering to the official implementation of EPO, it fails to meet performance expectations due to its extensive requirement for preference vector sampling. Increasing the number of solutions ($m$) generated by EPO yields noticeable enhancements, particularly in ZDT-3 and, to a lesser extent, in ZDT-1, as shown in Table~\ref{table:zdt_epo}. However, even with a larger $m$, EPO consistently lags behind \name, which achieves well-distributed solutions across Pareto fronts without relying on extensive sampling.

It is worth noting that we focus on scenarios where computational constraints limit us to training only $m$ models while needing to address numerous objectives ($n \gg m$). Therefore, we prioritize algorithms that afford better control over the number of generated solutions to achieve a promising trade-off.

\begin{table}[h!]
\centering
\caption{EPO's performance improves with larger $m$, yet still falls short compared to \name, which achieves superior results with fewer $m$.
}
\vspace{0.3em}
\begin{tabular}{lccccc}
\toprule
 & EPO ($m=5$) & EPO ($m=100$)  & \name ($m=5$)               \\ \hline
\specialrule{0em}{1pt}{1pt} \hline 
ZDT-1              &  4.40$_{\pm 0.01}$ & 4.46$_{\pm 0.01}$  & \textbf{5.87}$_{\pm 0.00}$        \\ \hline
ZDT-2              & 6.65$_{\pm 0.00}$  & 6.65$_{\pm 0.00}$ & \textbf{6.88}$_{\pm 0.00}$      \\ \hline
ZDT-3               & 4.53$_{\pm 0.68}$ & 6.09$_{\pm 0.00}$  & \textbf{6.39}$_{\pm 0.03}$       \\ \hline
\end{tabular}
\label{table:zdt_epo}
\end{table}

\subsection{Fairness-Accuracy Trade-offs ($n\ll m$)}

In this section, we apply \name to explore various
trade-offs between accuracy and algorithmic fairness (i.e., statistical independence between predictions and sensitive attributes). 
Thus, the number of objectives $n$ is 2.
However, in this scenario with limited number of objectives, using optimal transport to match solutions and objectives  may produce feasible but trivial solutions as explained in Section~\ref{sec:few-obj-extention}.
Hence, we adapt the extension of \name named \nameE (introduced in Section~\ref{sec:few-obj-extention}). 

As discussed in Section~\ref{sec:related_work}, prior works that address fairness-accuracy trade-offs can be limited due to the difficulty of setting constraints before training~\citep{zafar2017fairness}, or the mismatch between diverse exploration space and diverse solutions~\citep{mahapatra2020multi}.
\nameE differs by sampling a wide range of preference vectors to encompass various trade-offs comprehensively and using optimal transport to automatically generate solutions that maximize coverage for all preference vectors. We quantify the fairness objective using \textit{disparate impact}~\citep{impact}, 
and optimize it using its convex approximation~\citep{zafar2017fairness}.
We experiment on a synthetic dataset~\citep{zafar2017fairness} and a real German credit dataset~\citep{asuncion2007uci}.
We compare \name and \nameE with MGDA, linearization-based MOO (which can be viewed as a soft version of~\citet{zafar2017fairness}, and EPO~\citep{mahapatra2020multi}, and select the best parameters for each method based on the highest Hypervolume on a validation set.

\begin{figure*}[htbp]
    \begin{center}
    \includegraphics[width=1.0\textwidth]{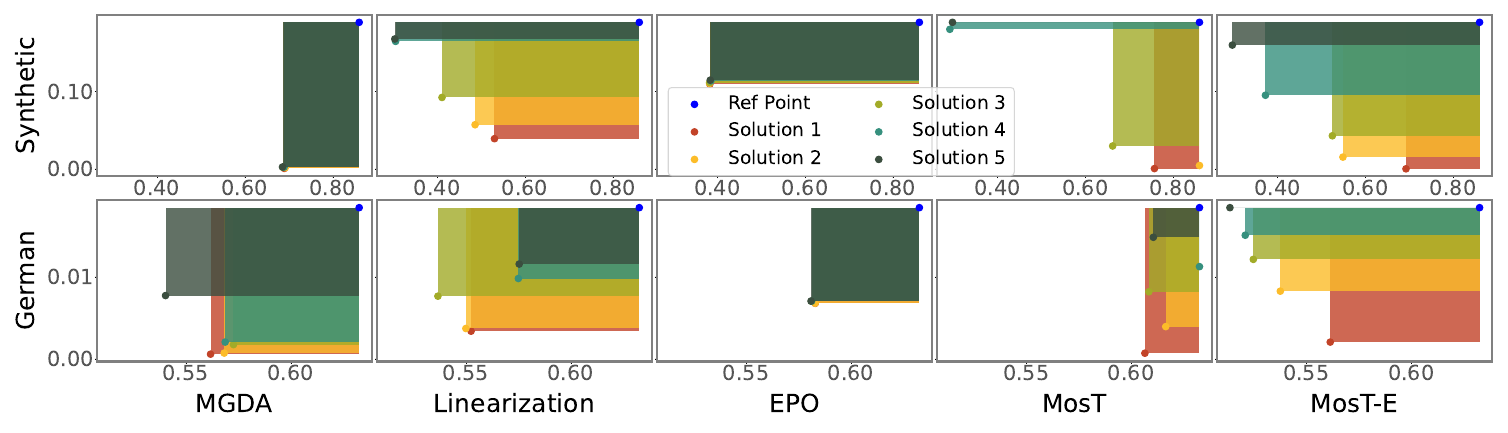}
  \end{center}
  \caption{
    Hypervolumes (colored areas) formed by five solutions for classification loss (objective 1, x-axis) and fairness (objective 2, y-axis) on synthetic and German datasets.
  }
  \label{fig:fair_pfront_hv}
\end{figure*}

\begin{table}[h!]
\centering
\caption{Hypervolumes ($\times$100) on 5 solutions with different fairness-accuracy trade-offs. \nameE achieves the highest Hypervolume coverage on two (fairness, accuracy) objectives.
}
\vspace{0.5em}
\begin{tabular}{lccccc}
\toprule
                  & MGDA & Linearization & EPO & \name & \nameE \\ 
                  \midrule
Synthetic &    3.28$_{\pm 0.05}$   &  6.70$_{\pm 0.04}$   & 2.44$_{\pm 0.01}$ & 3.78$_{\pm 0.04}$  & \textbf{7.65}$_{\pm 0.06}$ \\ \hline
German &   5.14$_{\pm 0.06}$   &  4.99$_{\pm 0.05}$   & 4.56$_{\pm 0.02}$ & 4.64$_{\pm 0.05}$  & \textbf{5.27}$_{\pm 0.06}$ \\ \bottomrule
\end{tabular}
\label{tab:fair_hv}
\end{table}

\textbf{\nameE generates more diverse trade-offs.}
Table~\ref{tab:fair_hv} shows that \nameE achieves the highest Hypervolumes, suggesting a superior quality of the solution set it generates.
Furthermore, Figure~\ref{fig:fair_pfront_hv} demonstrates that \nameE generates solutions that are not only more diverse but also more evenly distributed across the Pareto fronts.

\textbf{\nameE effectively addresses the problem of \name under $n\ll m$.}
When $n\ll m$, \name may assign models separately to dominate individual objectives, resulting in solutions without sufficient diversity.
The solutions generated by \name shown in Figure~\ref{fig:fair_pfront_hv}, align with our idea by predominantly prioritizing either low classification loss or low disparate impact.
This limitation is effectively overcome by \nameE, with diversely combining existing few objectives as new objectives.

\section{Experimental details}
\label{app:exp_detail}

We will detail the models and hyperparameters used for each dataset. 
All algorithms follow the same setup, including the train-validation-test split, number of training epochs, and tunable learning rates.

\textbf{Hyperparameters for baselines.}
In addition to the standard setup, we fine-tune hyperparameters specific to each baseline model, aligning with their original configurations. For instance, we adjust the hyperparameter responsible for scaling the proximal term in FedProx according to the recommendations provided in~\citep{li2020federated}.

\textbf{Description of KL divergence used.}
We employ KL divergence to assess differences in $\Gamma$ across iterations and solution diversity. Specifically, we utilize symmetric KL divergence, defined as $\frac{(KL(P||Q) + KL(Q||P))}{2}$, where $Q$ and $P$ represent probabilities in the distribution of $P$ and $Q$, respectively.

\subsection{Experimental Setup for Toy Problems}

\textbf{ZDT bi-objective problems}~\citep{zitzler2000comparison}.
It contains a class of benchmark problems commonly used to evaluate optimization algorithms, particularly those designed for multi-objective optimization. 
These problems involve optimizing two conflicting objectives simultaneously. 
We specifically employ ZDT-1, ZDT-2, and ZDT-3 to evaluate the performance of algorithms.
We use multinomial logistic regression, maintaining a consistent learning rate of 0.005 throughout training.
This configuration aligns with the established setup presented in~\cite{liu2021profiling}.
We run 1,000 epochs for the datasets.

\subsection{Experimental Setup for Federated Learning}

\textbf{Synthetic data} ~\citep{li2020federated}.
This synthetic dataset is specifically designed to provide controlled complexities and diverse scenarios for assessing the performance of algorithms.
The synthetic data generation process relies on two hyperparameters, $\rho_1$ and $\rho_2$, which shape the dataset's characteristics.
$\rho_1$ controls the heterogeneity among local models used to generate labels on each device.
While $\rho_2$ governs the differences in data distribution among devices.
Larger $\rho_1$ or $\rho_2$ introduces more heterogeneity. 
For the generated dataset, we conduct our experiments using a train-validation-test split ratio of 6:2:2.
We use multinomial logistic regression as the model and run 400 epochs in total.
The learning rates are swept from $\{0.005, 0.01, 0.05, 0.1\}$ without decaying throughout the training process.

\textbf{FEMNIST}~\citep{cohen2017emnist,caldas2018leaf}.
In addition to the synthetic datasets, we also conduct experiments on the Federated Extended MNIST (FEMNIST) dataset, a widely used real-world dataset in federated learning research~\citep{li2020federated}, using multinomial logistic regression.
It comprises handwritten digit images from multiple users, encompassing 62 classes, including digits (0-9) and uppercase and lowercase letters (A-Z, a-z). 
The data is distributed across 206 clients, with each client holding a subset of the digit classes.
This distribution simulates a real-world federated learning scenario, prioritizing data privacy and distribution concerns.
We employ a convolutional neural network featuring two convolutional layers with ReLU activation, followed by max-pooling.
Additionally, a fully connected layer maps the flattened features to 62 output classes.
We run 400 epochs for training.
Learning rates are swept from $\{0.08, 0.1\}$.

\subsection{Experimental Setup for Fairness-Accuracy Trade-Off}

In the context of fairness-accuracy trade-off, we experiment on two datasets, the synthetic dataset and the German dataset, introduced below.
We employ multinomial logistic regression as our model, conducting 20 epochs of training and sweeping learning rates from $\{0.08, 0.1\}$.
We use the enhanced \nameE for the German dataset.
\nameE extends the existing $n$ objectives to $n'$ by interpolating them with weights drawn from a Dirichlet distribution. 
We set all shape parameters, ${\alpha_1, \dots, \alpha_n}$, to the same value within the range $[0.1, 0.5, 1.0]$. 
The number of extended objectives is chosen from $[10, 15, 20]$. 
In practice, we observe that extending the original 2 objectives to 10 yields results similar to those obtained with 20 objectives.

\textbf{Synthetic dataset}~\citep{zafar2017fairness}.
The synthetic dataset contains 2,000 binary classification instances generated randomly as specified in ~\cite{zafar2017fairness}.
Binary labels for classification are generated using a uniform distribution.
It features 2-dimensional nonsensitive features generated from two distinct Gaussian distributions, and a 1-dimensional sensitive feature generated using a Bernoulli distribution.

\textbf{UCI German credit risk dataset}~\citep{asuncion2007uci}.
This dataset comprises 1,000 entries, each characterized by 20 categorical and symbolic attributes.
These attributes serve to classify individuals as either good or bad credit risks. Gender is considered as the sensitive attribute in this context.

\subsection{Experimental Setup for Multi-Task Learning}

In the realm of multi-task learning, we assess the efficacy of \name using the Office-Caltech10 and DomainNet datasets. The number of objectives $n$ varies across the datasets: $n=4$ for Office-Caltech10 and $n=6$ for DomainNet. We initialize the models with pre-trained weights for both datasets, leveraging ImageNet-pretrained ResNet-18~\citep{he2016deep} for Office-Caltech10 and ConvNeXt-tiny~\citep{liu2022convnet} for DomainNet.

It is worth noting that Pareto Multi-task Learning (PMTL) \citep{lin2019pareto} is a notable method, but its exclusion from our comparison is due to concerns regarding computational efficiency, particularly when applied to large-scale real-world datasets.

\textbf{Office-Caltech10 dataset}~\citep{saenko2010adapting,griffin2007caltech}.
The Office-Caltech10 dataset comprises images from four distinct data sources: Office-31\citep{saenko2010adapting} (three data sources) and Caltech-256~\citep{griffin2007caltech} (one data source). 
These sources capture images using different camera devices or in various real environments with diverse backgrounds, representing different objectives.

\textbf{DomainNet dataset}~\citep{peng2019moment}.
The DomainNet dataset includes natural images sourced from six distinct data sources: Clipart, Infograph, Painting, Quickdraw, Real, and Sketch. This dataset is characterized by its diversity, covering a wide range of object categories. 
For our experiments, we focus on a sub-dataset composed of the top ten most common object categories from the extensive pool of 345 categories within DomainNet, following~\cite{li2021fedbn}.

\subsection{Experimental Setup for Prompt Learning}
\label{app:setup_pl}

We explore prompt learning across three datasets from the SuperGLUE benchmark: \textbf{BoolQ}~\cite{clark2019boolq}, \textbf{MultiRC}~\cite{khashabi2018looking}, and \textbf{WiC}~\cite{pilehvar2018wic}.
In our approach, each instance represents a distinct objective. 
This framework allows us to delve into prompt learning using a limited set of training instances while aiming for generalization to unseen test instances. 
We randomly sample 128 instances from the training dataset and evenly partition the original validation dataset to form both the validation and test datasets. 
Our training involves a soft prompt approach based on the T5-base model, with the base model parameters kept frozen.
Parameter setup follows ~\cite{qin2021learning}.

For prompt learning, where each instance is considered an objective, the absence of client groups or task types, as seen in federated learning or multi-task learning, prevents us from evaluating solution performance over the validation set and then selecting the best solution for inference. 
To address this, we train a simple dispatcher to learn the correlation between instances and solutions (prompts), predicting the optimal solution for a given instance. Specifically, we train cross-attention on the hidden embedding of soft prompts and instances, with architecture following~\cite{lee2018stacked} (Section 3.1). These hidden embeddings are generated from a fixed encoder of the T5-base.

\section{Ablation Study on \name Design}

To generate diverse and complementary solutions for multiple objectives, \name first finds a balanced matching between objectives and solutions by OT along with elaborated learning strategies and then locates the descent direction common to re-weighted objectives using MGDA.
In this section, we conduct comprehensive ablation studies to verify the \name design.
Specifically, for OT, we verify the necessity of it (Appendix~\ref{app:OT_ablation}) and its specific designs, including solution-specialization curriculum (Appendix~\ref{app:curriculum_learning}) and sparsity encouragement imposed by L1 regularization (Appendix~\ref{app:OT_diversity}).
Additionally, we evaluate the necessity of MGDA in Appendix~\ref{app:MGDA_ablation}.
These ablation studies contribute to a thorough understanding of the effectiveness of each component within the overall \name framework.
Experiments are carried out on three synthetic federated learning datasets, with results shown in Table~\ref{tab:fl-ablation} and Figure~\ref{fig:fl_all_nocl}.
 
\begin{tabular}{c}
\toprule
\begin{tabular}{@{}p{2cm}<{\centering}p{2cm}<{\centering}p{2cm}<{\centering}p{2cm}<{\centering}p{2cm}<{\centering}p{2cm}<{\centering}p{2cm}<{\centering}@{}} & MGDA & Linearization & FedAvg & FedProx & FedMGDA+ & \name w/o $R(\Gamma)$  \end{tabular} \\ 
\midrule
\begin{tabular}{@{}p{2cm}<{\centering}p{2cm}<{\centering}p{2cm}<{\centering}p{2cm}<{\centering}p{2cm}<{\centering}p{2cm}<{\centering}p{2cm}<{\centering}@{}} Syn (0.0, 0.0)
&  77.22$_{\pm 0.41}$    & 75.91$_{\pm 0.37}$        & 75.71$_{\pm 0.51}$ & 75.60$_{\pm 0.42}$  & 75.26$_{\pm 1.21}$ & 83.09$_{\pm 0.87}$  \end{tabular} \\ \hline
\begin{tabular}{@{}p{2cm}<{\centering}p{2cm}<{\centering}p{2cm}<{\centering}p{2cm}<{\centering}p{2cm}<{\centering}p{2cm}<{\centering}p{2cm}<{\centering}@{}} Syn (0.5, 0.5)
&  87.09$_{\pm 0.29}$    & 87.18$_{\pm 0.27}$        & 86.26$_{\pm 0.61}$ & 86.13$_{\pm 0.39}$  & 85.21$_{\pm 1.42}$ & 89.07$_{\pm 0.63}$  
\end{tabular} \\ \hline
\begin{tabular}{@{}p{2cm}<{\centering}p{2cm}<{\centering}p{2cm}<{\centering}p{2cm}<{\centering}p{2cm}<{\centering}p{2cm}<{\centering}p{2cm}<{\centering}@{}} Syn (1.0, 1.0)
&  90.52$_{\pm 0.13}$    & 89.87$_{\pm 0.51}$        & 88.12$_{\pm 0.75}$ & 87.58$_{\pm 1.36}$  & 87.16$_{\pm 1.09}$ & 91.70$_{\pm 0.02}$
\end{tabular} \\ 
\midrule
\begin{tabular}{@{}p{2cm}<{\centering}p{2cm}<{\centering}p{2cm}<{\centering}p{2cm}<{\centering}p{2cm}<{\centering}p{2cm}<{\centering}p{2cm}<{\centering}@{}} & \name (O) & \name (M) & \name w/o CL        & w-MGDA & \name (L) & \name 
\end{tabular} \\ 
\midrule
\begin{tabular}{@{}p{2cm}<{\centering}p{2cm}<{\centering}p{2cm}<{\centering}p{2cm}<{\centering}p{2cm}<{\centering}p{2cm}<{\centering}p{2cm}<{\centering}@{}} Syn (0.0, 0.0) & 76.65$_{\pm 0.81}$ & 67.62$_{\pm 3.46}$ &   81.97$_{\pm 0.58}$       & 76.80$_{\pm 0.79}$ &  82.62$_{\pm 0.33}$ & \textbf{84.25}$_{\pm 0.51}$
\end{tabular} \\ \hline
\begin{tabular}{@{}p{2cm}<{\centering}p{2cm}<{\centering}p{2cm}<{\centering}p{2cm}<{\centering}p{2cm}<{\centering}p{2cm}<{\centering}p{2cm}<{\centering}@{}} Syn (0.5, 0.5) & 86.94$_{\pm 0.61}$ &  78.98$_{\pm 2.04}$  &  88.19$_{\pm 0.40}$       & 86.18$_{\pm 1.19}$  & 88.85$_{\pm 0.34}$ & \textbf{89.99}$_{\pm 0.52}$
\end{tabular} \\ \hline
\begin{tabular}{@{}p{2cm}<{\centering}p{2cm}<{\centering}p{2cm}<{\centering}p{2cm}<{\centering}p{2cm}<{\centering}p{2cm}<{\centering}p{2cm}<{\centering}@{}} Syn (1.0, 1.0) & 90.42$_{\pm 0.22}$ &  75.20$_{\pm 3.75}$  &  91.25$_{\pm 0.51}$       & 89.32$_{\pm 0.76}$  & 91.26$_{\pm 0.44}$ & \textbf{92.21}$_{\pm 0.08}$ 
\end{tabular} \\ \hline
\label{tab:fl-ablation}
\end{tabular}

\begin{figure}[htbp]
    \begin{center}
    \includegraphics[width=1.0\textwidth]{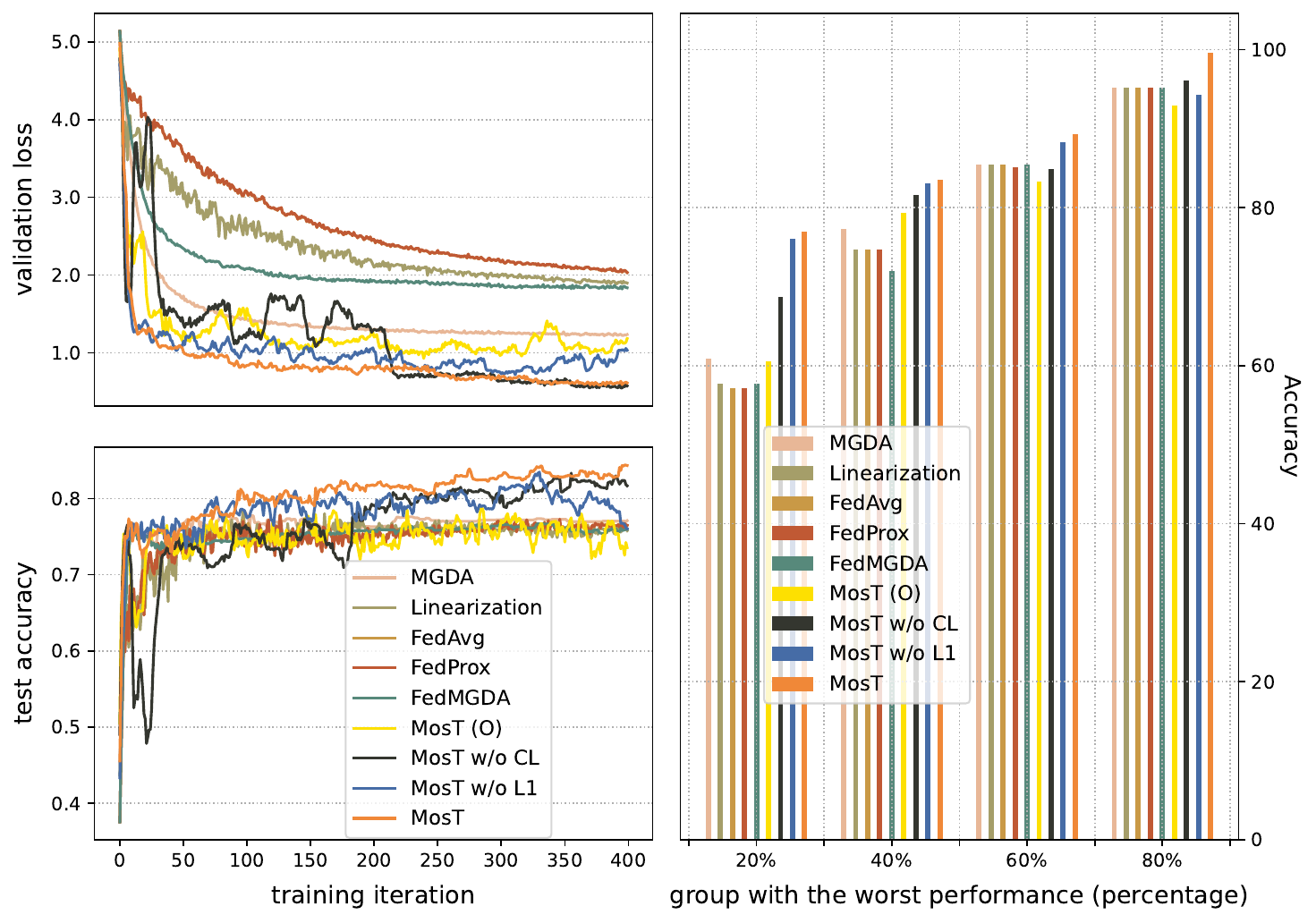}
  \end{center}
  \caption{Including \name with a series of ablation study results, (a) displays training loss and test accuracy curves; (b) shows the accuracy of the worst 20\%, 40\%, 60\% and 80\% client groups.
  }
  \label{fig:fl_all_nocl}
\end{figure}

\subsection{Ablation Study for OT}
\label{app:OT_ablation}

\textbf{OT-Generated v.s. Randomly Generated Weight Assignments.}
We compare the weight assignments generated by OT with the randomly generated weight assignments.
This can verify the impact of the choice of objective weighting method in MGDA on the overall performance of \name.
In other words, we compare \name with executing MGDA $m$ times by using randomly generated weights to reweight objectives, which is denoted as \textit{w-MGDA}.
Experimental results reveal that OT-generated weights work significantly better than random weights.
\textit{This illustrates the necessity of using OT to find a balanced match between solutions and objectives.}

\textbf{Different Matching Strategies.}
We also conduct ablation studies on the effectiveness of the optimal transport matching (Eq.~\eqref{obj:ot}) by comparing three strategies introduced in Section~\ref{sec:curriculum}: 1) the original \name objective, which utilizes optimal transport; 2) ``objective selecting model'', which selects the best expert/model for each objective (i.e., removing the $\Gamma^{\top} \vec{1}_n=\beta$ constraint); and 3)  ``model selecting objective'', which selects the best objective for each model (i.e., removing the $\Gamma \vec{1}_m=\alpha$ constraint).
These two variants are denoted as \textit{\name (O)} and \textit{\name (M)}, resp., with results shown in Table~\ref{tab:fl-ablation} and Figure~\ref{fig:fl_all_nocl}.

To ensure a fair comparison, we initialize comparisons with the same model weights. Throughout the training process, we track the assignment of objectives to each model, i.e., for every model, identifying the objectives with the smallest validation loss. 
We visualize the percentages of the selected objectives for each model over time 
in Figure~\ref{fig:fl_collapse} on the Syn (0.0, 0.0) dataset.
In the case of ``objective selecting model'' (middle), we observe that two of the models progressively dominate all the objectives. 
Similarly, ``model selecting objective'' shows the early dominance of one model. 
These observations confirm the presence of the collapse phenomenon in MOO, where limited solutions dominate all objectives. 
On the contrary, \name using optimal transport involving a two-way matching shows a more balanced distribution of objectives among the models throughout training.

\begin{figure}[htbp]

    \begin{center}
    \includegraphics[width=1.0\textwidth]{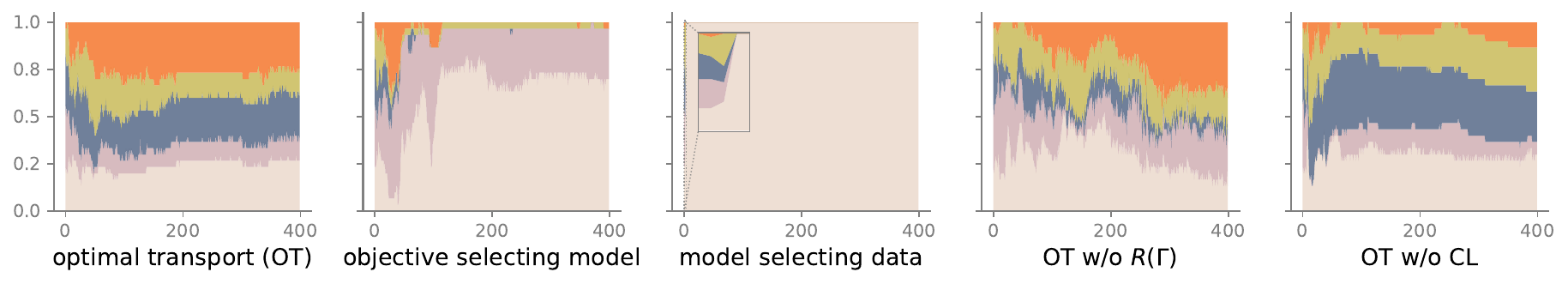}
  \end{center}
  \caption{The percentage of assigned objectives for each model under three matching strategies and two variants of \name.
  Each color band represents a model, with the y-axis indicating the corresponding percentage. We see that \name (leftmost) learns 5 diverse models that serve the 30 objectives in a balanced manner.
  }
  \label{fig:fl_collapse}
\end{figure}

\subsection{Ablation Study for OT Design - Curriculum Learning}
\label{app:curriculum_learning}

We evaluate the impact of curriculum learning (from Section~\ref{sec:curriculum}) on optimizing multiple models using \name.

\textbf{Curriculum setup for \name.}
As introduced in Section~\ref{sec:curriculum}, our proposed curriculum strategy involves adjusting marginal distributions $\alpha$ and $\beta$ over different training stages to balance the freedom of `model selecting objective' and `objective selecting model'.
In the initial stages, we prioritize a uniform distribution for $\beta$ to ensure exposure to multiple objectives. 
As training progresses, we transition $\alpha$ to a uniform distribution, covering all objectives, while relaxing $\beta$. 
This transition is achieved by a hyperparameter that gradually decreases from 1 to 0.
Though this hyperparameter gradually approaches zero, the transition direction differs: it shifts $\beta$ from uniform to performance-oriented and, conversely, shifts $\alpha$ in the opposite direction.

We compare standard \name with a variant using uniform marginal distributions for $\alpha$ and $\beta$ throughout training, denoted as \textit{\name w/o CL}.
We hypothesize that curriculum learning enhances overall performance and training stability. 
We conduct experiments on three synthetic federated learning datasets. The results in Table~\ref{tab:fl-ablation} show that using curriculum learning significantly improves the performance of \name, proving its effectiveness. 
Notably, even without curriculum learning, \name outperforms other algorithms.
Furthermore, Figure~\ref{fig:fl_all_nocl} illustrates the training loss and test accuracy curves, highlighting the stability difference between the two approaches during training. Curriculum learning leads to increased stability and better convergence towards better solutions.

\subsection{Ablation Study for OT Design - Diversity Encouragement}
\label{app:OT_diversity}

As detailed in Section~\ref{sec:convergence} and supported by empirical evidence in Section~\ref{sec:sparsity_analysis}, encouraging a sparse and balanced alignment between objectives and solutions leads to solution specialization on objectives. \name goes a step further by employing diversity regularized optimal transport to promote diversity.

\paragraph{Enhancing Diversity through Sparse Transport and Regularization.}
Before imposing diversity regularization to further enforce diversity, we aim to verify whether \name without diversity regularization can generate diverse solutions.
We track solution diversity throughout the training process using KL divergence, as explained in Section~\ref{sec:exp_fl}, with results depicted in Figure~\ref{fig:diversity_comp}.
Our empirical findings indicate that even without $R(\Gamma)$, sparse transport yields diverse solutions that balance all objectives.
However, by setting $R(\Gamma)$ to be the negative of our diversity measure (Definition~\ref{def:diversity}), we can further encourage diversity.
In Figure~\ref{fig:chord}, we depict the distribution of model specialization across objectives, assessed through normalized model accuracies. Notably, solutions trained with \name exhibit a tendency to specialize in specific objectives, underscoring their heightened diversity compared to baseline approaches.
And these diverse solutions then jointly perform better.

\begin{figure}[!htbp]
\setlength{\abovecaptionskip}{-20pt}
\centering
\includegraphics[width=0.76\textwidth]{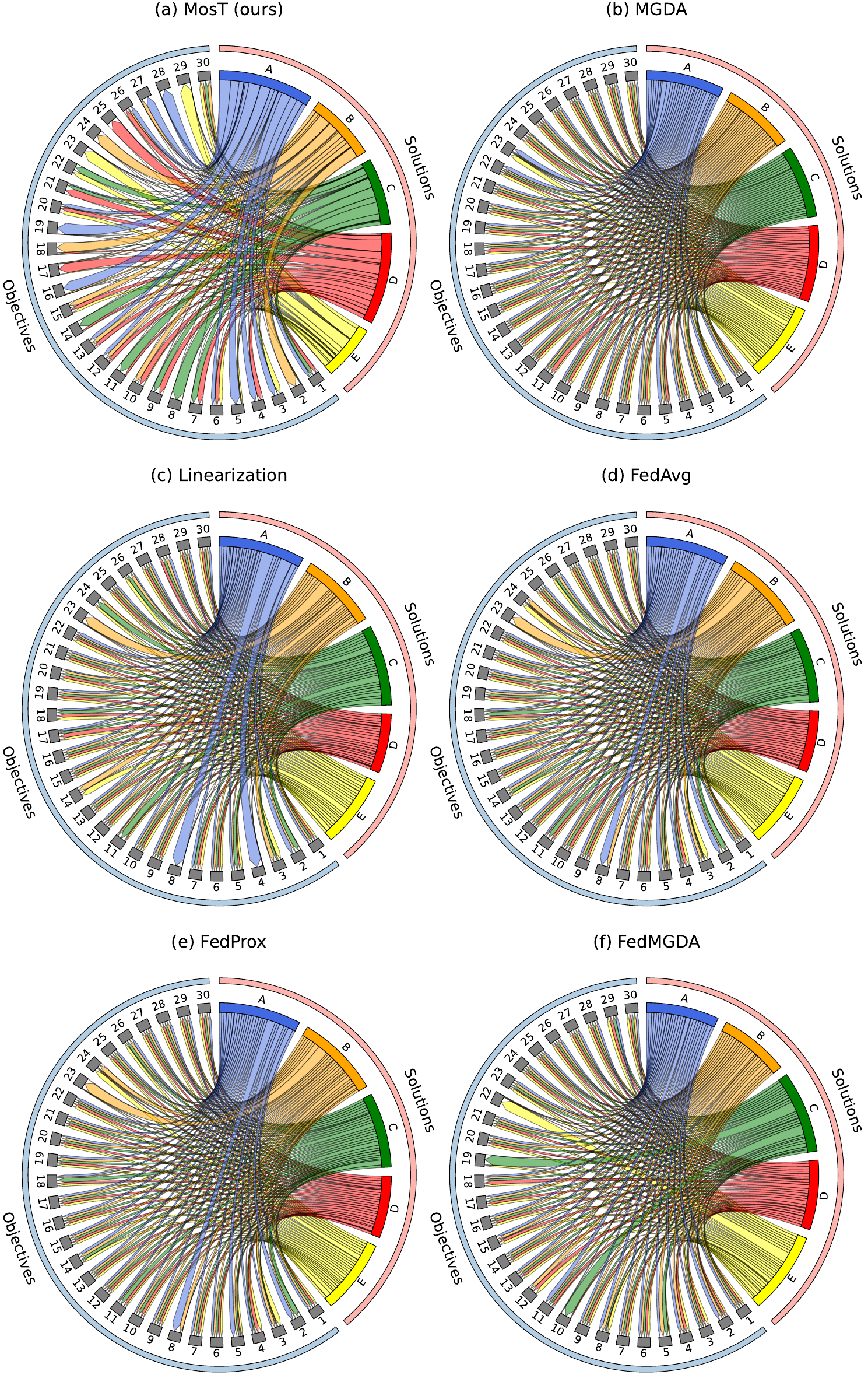}
\caption{
Proportion of 5 solutions (A/B/C/D/E), trained by \name (a) and baselines (b)-(f), specialized on 30 objectives (labeled as numbers).  
Notably, in (a), wider ribbons indicate that \name-trained solutions address each objective with more specialization. In contrast, baseline solutions exhibit similar specializations across objectives. This highlights the enhanced solution diversity of \name, a key factor contributing to its overall performance as shown in Figure ~\ref{fig:syn_radar_map}.
}
\label{fig:chord}
\end{figure}

\paragraph{Performance Impact of Diversity Encouragement.}
Building on the motivation outlined in Section~\ref{sec:sparsity_analysis}, this section focuses on showcasing the performance benefits resulting from diversity encouragement.
To assess the impact of diversity regularization, we conduct a comparative analysis between scenarios with and without it.
Table~\ref{tab:fl-ablation} and Figure~\ref{fig:fl_all_nocl} highlight that \name with diversity regularization not only enhances performance but also contributes to the fairness of federated learning.

\subsection{Ablation Study for MGDA}
\label{app:MGDA_ablation}

\textbf{MGDA v.s. Linearization in Weighted Multi-Objective Optimization.}
We compare \name that uses MGDA against the variant that updates model parameters based on the optimal transport solution weights. 
We aim to understand how effectively these two methods determine gradient updates for weighted multi-objective optimization. 
In this variant of \name, denoted as \textit{\name (L)}, instead of seeking the Pareto solution of $m$ weighted objectives (as indicated in Eq.~\eqref{obj:pareto}), we compute $\theta_j$ as $\theta_j = \sum^{n}_{i=1} \Gamma^{i}_{j} \theta^{i}_j$, where $\theta^{i}_j$ represents the parameter of $\theta_j$ trained on data from the $i$-th objective. 
The experimental results showcase the consistent superiority of \name over both MGDA and \textit{\name (L)} across three synthetic federated learning scenarios.
\textit{It proves the effectiveness of employing MGDA for parameter updates.}

\begin{figure}[h]
    \begin{center}
    \includegraphics[width=0.8\textwidth]{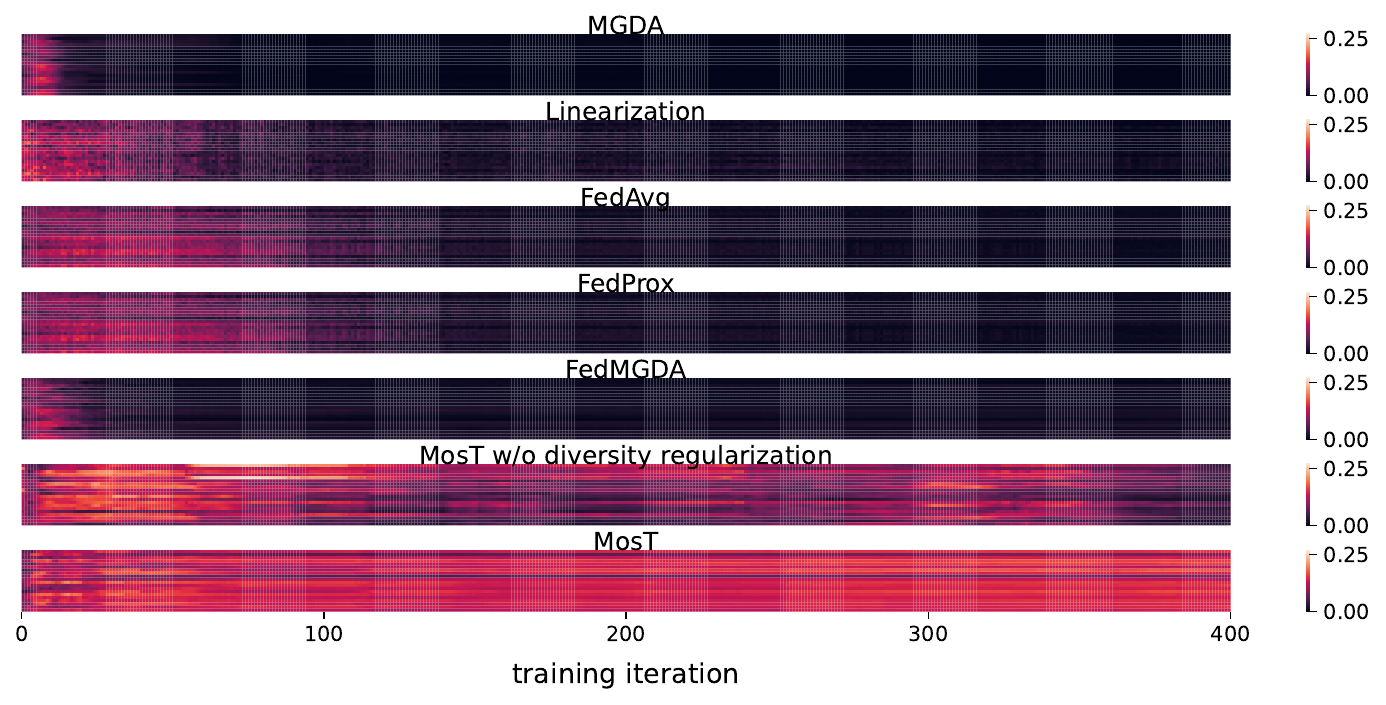}
  \end{center}
  \caption{KL divergence between pairwise solution predictions. Baselines show decreasing diversity over training iterations, whereas both \name variants maintain diversity. \name with diversity regularization fosters diversity.
  }
  \label{fig:diversity_comp}
\end{figure}

\section{Comparative runtime analysis}
\label{app:time}

We assess the runtime of algorithms on the same platform, providing analyses for various applications: federated learning (Table~\ref{tab:fl_time}), multi-task learning (Table~\ref{tab:mtl_time}), mixture-of-prompt learning (Table~\ref{tab:pl_time}), ZDT datasets (Table~\ref{table:zdt_hv_time}), and fairness-accuracy trade-off (Table~\ref{tab:fair_time}). Additionally, we include the computation time required for OT and MGDA within \name.

Our results indicate that \name exhibits comparable running times to baselines, despite \name involving the computation of OT and MGDA, both of which only account for negligible time. However, for \nameE, which explicitly extends the number of objectives, it will require more time than baselines.

\begin{table}[H]
\centering
\caption{Runtime (sec) comparisons for all methods on federated learning datasets, performed on a single Nvidia RTX A5000 platform.}
\vspace{0.3em}
\resizebox{1.0\linewidth}{!}{
\begin{tabular}{lccccc|c|c|c}
\toprule
& MGDA & Linearization & FedAvg & FedProx & FedMGDA+ & \name & \name-OT & \name-MGDA \\ \hline
\specialrule{0em}{1pt}{1pt} \hline 
Syn (0.0, 0.0)
&  219.86   &  225.90      & 222.22  &  281.69  & 516.25    & 217.59             &  1.00  & 0.44 \\ \hline
Syn (0.5, 0.5)
&  208.82   &  208.27      & 205.90 &  258.19  & 495.62  & 201.70             &  0.92  & 0.23 \\ \hline
Syn (1.0, 1.0)
& 269.44   &   268.33     & 270.65  & 333.74   & 557.58   & 260.50            & 0.98   & 0.35  \\ \hline
\specialrule{0em}{1pt}{1pt} \hline 
FEMNIST  &  3522.83   &  3135.76      & 3147.62 & 3539.85 & > 5000.00  &  3368.63  & 0.94        & 45.35  \\ \hline
\end{tabular}}
\label{tab:fl_time}
\end{table}

\begin{table}[H]
\centering
\caption{Runtime (sec) comparisons for all methods on multi-task learning datasets, performed on a single Nvidia RTX A5000 platform.}
\resizebox{0.8\linewidth}{!}{
\begin{tabular}{cccc|c|c|c}
\toprule
& MGDA & Linearization & EPO & \name & \name-OT & \name-MGDA \\ \hline
\specialrule{0em}{1pt}{1pt} \hline 
Office-Caltech10 &  465.73   &     669.88         & 775.24  & 371.06  &  0.54 & 8.08 \\ \hline
DomainNet &   294.23  &   341.27     &  347.23 &  226.9 & 0.08  & 15.43 \\ \hline
\end{tabular}}
\label{tab:mtl_time}
\end{table}

\begin{table}[H]
\centering
\caption{Runtime (sec) comparisons for all methods on prompt learning datasets, performed on a single Nvidia RTX A5000 platform.
}
\resizebox{0.65\linewidth}{!}{
\begin{tabular}{ccc|c|c|c}
\toprule
& MGDA & Linearization &  \name & \name-OT & \name-MGDA  \\ \hline
\specialrule{0em}{1pt}{1pt} \hline 
BoolQ
& 2287.03    &   2170.97      &  1568.47  
& 0.29 & 0.02
\\ \hline
MultiRC
& 2108.14    &   1820.73      & 1892.47  & 0.37 &  0.09 \\ \hline
WiC
&   1286.30  &    1187.56     & 1254.51 & 0.37 & 0.05  \\ \hline

\end{tabular}}
\label{tab:pl_time}
\end{table}

\begin{table}[H]
\centering
\caption{Runtime (sec) comparisons for all methods on ZDT datasets, performed on a single Nvidia RTX A4000 platform.
}
\vspace{0.3em}
\resizebox{0.75\linewidth}{!}{
\begin{tabular}{lcccc|c|c|c}
\toprule
 & MGDA & Linearization & SVGD & EPO  & \name  & \name-OT & \name-MGDA              \\ \hline
\specialrule{0em}{1pt}{1pt} \hline 
ZDT-1              & 123.77 & 13.03   & 9.26 & 34.36 &  38.05   & 1.07 &  2.19    \\ \hline
ZDT-2              & 124.02 &   13.32  & 9.29 & 34.54 & 37.32  & 0.77  &  1.43    \\ \hline
ZDT-3              & 140.85 &   14.98   & 10.08 & 37.75 &  40.82 & 0.87  &  1.99     \\ \hline
\end{tabular}
}
\label{table:zdt_hv_time}
\end{table}

\begin{table}[H]
\centering
\caption{Runtime (sec) comparisons for all methods on fairness-accuracy trade-off datasets, performed on a single Nvidia RTX A4000 platform.
}
\vspace{0.5em}
\begin{tabular}{lccc|c|c|c}
\toprule
                  & MGDA & Linearization & EPO &  \nameE & \name-OT & \name-MGDA \\ \hline
                  \specialrule{0em}{1pt}{1pt} \hline 
Synthetic &   888.59    & 43.32 & 79.09 & 1159.17 & 0.15 & 0.08\\ \hline
German &  454.64   & 23.98   & 43.10 & 595.34 & 0.14 & 0.03 \\ \hline
\end{tabular}
\label{tab:fair_time}
\end{table}

\end{document}